\numberwithin{equation}{section}
\numberwithin{figure}{section}
\newcommand\crule[3][black]{\textcolor{#1}{\rule{#2}{#3}}}
\definecolor{c1}{RGB}{228,26,28}
\definecolor{c2}{RGB}{55,126,184}
\definecolor{c3}{RGB}{77,175,74}
\definecolor{c4}{RGB}{152,78,163}
\definecolor{c5}{RGB}{255,127,0}
\definecolor{c6}{RGB}{255,255,51}
\definecolor{c7}{RGB}{166,86,40}
\definecolor{c8}{RGB}{247,129,191}
\theoremstyle{plain}
\newtheorem{theorem}{Theorem}[section]
\newtheorem{lemma}[theorem]{Lemma}
\newtheorem{proposition}[theorem]{Proposition}
\theoremstyle{definition}
\newtheorem{remark}[theorem]{Remark}
\newcommand{\bitem}{\begin{itemize}}
\newcommand{\eitem}{\end{itemize}}
\newcommand{\mc}[1]{\mathcal{#1}}
\newcommand{\N}{\mathbb{N}}
\newcommand{\R}{\mathbb{R}}
\newcommand{\bpm}{\begin{pmatrix}}
\newcommand{\epm}{\end{pmatrix}}
\newcommand{\bvm}{\begin{vmatrix}}
\newcommand{\evm}{\end{vmatrix}}
\newcommand{\bsm}{\left(\begin{smallmatrix}}
\newcommand{\esm}{\end{smallmatrix}\right)}
\newcommand{\T}{\top}
\newcommand{\wh}[1]{\widehat{#1}}
\newcommand{\wt}[1]{\widetilde{#1}}
\newcommand{\la}{\langle}
\newcommand{\ra}{\rangle}
\newcommand{\veps}{\varepsilon}
\newcommand{\w}{\omega}
\newcommand{\vphi}{\varphi}
\newcommand{\eins}{\mathbb{1}}
\DeclareMathSymbol{\mydiv}{\mathbin}{symbols}{"04}
\DeclareMathOperator{\Diag}{Diag}
\DeclareMathOperator{\supp}{supp}
\DeclareMathOperator{\vvec}{vec}
\DeclareMathOperator{\Exp}{Exp}
\DeclareMathOperator{\expm}{expm}
\newcommand{\sst}[1]{{\scriptscriptstyle #1}}
\def\widebreve{\mathpalette\wide@breve}
\def\wide@breve#1#2{\sbox\z@{$#1#2$}%
     \mathop{\vbox{\m@th\ialign{##\crcr
\kern0.08em\brevefill#1{0.8\wd\z@}\crcr\noalign{\nointerlineskip}%
                    $\hss#1#2\hss$\crcr}}}\limits}
\def\brevefill#1#2{$\m@th\sbox\tw@{$#1($}%
  \hss\resizebox{#2}{\wd\tw@}{\rotatebox[origin=c]{90}{\upshape(}}\hss$}
\newcommand*{\Reals}{\mathbb{R}}
\title[Learning Linearized Assignment Flows]{Learning Linearized Assignment Flows for Image Labeling}
\author[A.~Zeilmann, S.~Petra, C.~Schn\"{o}rr]{Alexander Zeilmann, Stefania Petra, Christoph Schn\"{o}rr}
\address[A.~Zeilmann, C.~Schn\"{o}rr]{Image and Pattern Analysis Group, Heidelberg University, Germany}
\urladdr{\url{https://ipa.math.uni-heidelberg.de}}
\address[S.~Petra]{Mathematical Imaging Group, Heidelberg University, Germany}
\urladdr{\url{https://www.stpetra.com}}
\date{}
\begin{document}

\begin{abstract}
We introduce a novel algorithm for estimating optimal parameters of linearized assignment flows for image labeling. An exact formula is derived for the parameter gradient of any loss function that is constrained by the linear system of ODEs determining the linearized assignment flow. We show how to efficiently evaluate this formula using a Krylov subspace and a low-rank approximation. This enables us to perform parameter learning by Riemannian gradient descent in the parameter space, without the need to backpropagate errors or to solve an adjoint equation.
Experiments demonstrate that our method performs as good as highly-tuned machine learning software using automatic differentiation.
Unlike methods employing automatic differentiation, our approach yields a low-dimensional representation of internal parameters and their dynamics which helps to understand how assignment flows and more generally neural networks work and perform.
\end{abstract}

\keywords{assignment flows, image labeling, parameter learning, exponential integration, low-rank approximation}
\subjclass[2010]{34C40, 62H35, 68U10, 68T05, 91A22}

\maketitle
\tableofcontents

\section{Introduction}\label{sec:Introduction}

\subsection{Overview, Motivation}
Learning the parameters of large neural networks from training data constitutes a basic
problem in imaging science, machine learning and other fields.
The prevailing approach utilizes gradient descent or approximations thereof based on automatic differentiation \cite{Baydin:2018aa} and corresponding software tools, like PyTorch~\cite{Paszke2019} and TensorFlow~\cite{Abadi2015}. This kind of software support has been spurring research in imaging science and machine learning dramatically. However, merely relying on numerical schemes and their automatic differentiation tends to thwart attempts to shed light on the often-criticized black-box behavior of deep networks and to better understand the internal representation and function of parameters and their adaptive dynamics.

In this paper, we explore a different route. Adopting the \textit{linearized assignment flow} approach introduced by \cite{Zeilmann2020}, we focus on a corresponding large system of linear ODEs of the form
\begin{equation}\label{eq:LAF-intro}
\dot V = A(\Omega) V + B,
\end{equation}
and study a geometric approach to learning the regularization parameters $\Omega$ by Riemannian gradient descent of a loss function
\begin{equation}\label{eq:loss-intro}
\Omega\mapsto\mc{L}(V(T;\Omega))
\end{equation}
constrained by the dynamical system \eqref{eq:LAF-intro}. Here, we exploit the crucial property that the solution to \eqref{eq:LAF-intro} can be specified in closed form \eqref{eq:V-t-vphi} and can be computed efficiently using exponential integration (\cite{Zeilmann2020} and Section \ref{ssec:Exponential_Integration}). Matrix $V \in \R^{|I|\times c}$ represents a tangent vector of the so-called assignment manifold, $|I|$ is the number of nodes $i\in I$ of the underlying graph, and $c$ is the number of labels (classes) that have to be assigned to data observed at nodes $i\in I$. Specifically,
\begin{itemize}
\item
we derive a formula -- see Theorem \ref{thm:loss-function-gradient} -- for the Euclidean parameter gradient $\partial_{\Omega}\mc{L}(V(T;\Omega))$ in closed form;
\item
we show that a low-rank representation of this gradient can be used to efficiently and accurately approximate this closed form gradient; neither backpropagation, nor automatic differentiation or solving adjoint equations are required;
\item
we highlight that the resulting parameter estimation algorithm, in terms of a Riemannian gradient descent iteration \eqref{eq:R-descent-iteration} on the parameter manifold,  can be implemented without any specialized software support with modest
computational resources;
\end{itemize}
The significance of our work reported in this paper arises in a broader context. The linearized assignment flow approach also comprises the equation
\begin{equation}\label{eq:W-VT}
W(T) = \Exp_{\eins_{\mc{W}}}(V(T))
\end{equation}
that yields the labeling in terms of almost integral assignment vectors $W_{i}\in\R_{+}^{c},\; i\in I$ that form the rows of the matrix $W$, depending on the solution $V(t)$ of \eqref{eq:LAF-intro} for a sufficiently large time $t=T$. Both equations \eqref{eq:W-VT} and \eqref{eq:LAF-intro} together constitute a linearization of the full nonlinear assignment flow \cite{Astrom:2017ac}
\begin{equation}\label{eq:AF-intro}
\dot W = R_{W}S(W)
\end{equation}
at the barycenter $\eins_{\mc{W}}$ of the assignment manifold. Choosing an arbitrary sequence of time intervals (step sizes) $h_{1}, h_{2}, \dotsc$ and setting
\begin{equation}
W^{(0)}=\eins_{\mc{W}},\qquad
W^{(k)}=W(h_{k}),\qquad k\in\N,
\end{equation}
a sequence of linearized assignment flows
\begin{subequations}\label{eq:exp-int-AF-intro}
\begin{align}
W^{(k+1)} &= \Exp_{\eins_{\mc{W}}}(V^{(k)}),
\label{eq:exp-int-AF-intro-a} \\
\label{eq:exp-int-AF-intro-b}
V^{(k+1)} &= V^{(k)}+V\big(h_{k};\Omega^{(k)},W^{(k)}\big),\quad k=0,1,2,\dots
\intertext{can be computed in order to approximate \eqref{eq:AF-intro} more closely, where $V\big(h_{k};\Omega,W^{(k)}\big)$ solves the corresponding updated ODE \eqref{eq:LAF-intro} of the form
} \label{eq:exp-int-AF-intro-c}
\dot V &= A(\Omega^{(k)}; W^{(k)}) V + \Pi_{0}S(W^{(k)}).
\end{align}
\end{subequations}
The time-discrete equations \eqref{eq:exp-int-AF-intro} reveal two basic ingredients of deep networks (or neural ODEs) which the full assignment flow \eqref{eq:AF-intro} embodies in a continuous-time manner: coupling a pointwise nonlinearity \eqref{eq:exp-int-AF-intro-a} and diffusion \eqref{eq:exp-int-AF-intro-b},\eqref{eq:exp-int-AF-intro-c} enhances the expressivity of network models for data analysis.

The key point motivating the work reported in this paper is that our results apply to learning the parameters $\Omega^{k}$ in each step of the iterative scheme \eqref{eq:exp-int-AF-intro}.  We expect that the gradient, and its low-dimensional subspace representations, will help the further study of how each ingredient of \eqref{eq:exp-int-AF-intro} impacts the predictive power of assignment flows. Furthermore, `deep' extensions of \eqref{eq:AF-intro} and \eqref{eq:exp-int-AF-intro} are equally feasible within the \textit{same} mathematical framework (cf.~Section \ref{ssec:Application_Generalized_LAF}).

\subsection{Related Work}\label{sec:Related-Work}
\textit{Assignment flows} were introduced by \cite{Astrom:2017ac}. For a survey of prior and recent related work, we refer to \cite{Schnorr:2019aa}.
\textit{Linearized} assignment flows were introduced by \cite{Zeilmann2020} as part of a comprehensive study of numerical schemes for the geometric integration of the assignment flow equation \eqref{eq:AF-intro}.


While the bulk of these schemes are based on a Lie group action (cf.~\cite{Iserles:2000ab})
on the assignment manifold, which enables to apply established theory and algorithms for the numerical integration of ODEs that evolve in an Euclidean space \cite{Hairer:2008aa}, the linearity of the ODE \eqref{eq:LAF-intro} specifically allows to represent its solution in \textit{closed form}  by the Duhamel (or variation-of-constants) formula \cite{Teschl:2012aa}.
Corresponding extensions to \textit{nonlinear} ODEs rely on exponential integration \cite{Hochbruck:2009uk,Hochbruck:2010aa}. Iteration \eqref{eq:exp-int-AF-intro} combines a corresponding iterative scheme and the tangent-space based parametrization \eqref{eq:W-VT} of the linearized assignment flow.

A key computational step of the latter class of methods requires to evaluate an analytical matrix-valued function, like the matrix exponential and similar functions \cite[Section 10]{Higham2008}. While basic methods \cite{Moler2003} only work for problem of small and medium size, dedicated methods using Krylov subspaces \cite{Hochbruck:1997aa,Al-Mohy2011} and established numerical linear algebra \cite{Saad:1992aa,Saad2003} can be applied to larger problems. The algorithm that results from our approach employs such methods.

Machine learning requires to compute gradients of loss functions that take solutions of ODEs as argument. This defines an enormous computational task and explains why automatic differentiation and corresponding software tools are almost exclusively applied. Alternative dedicated recent methods like \cite{Kandolf2021} focus on a special problem structure, viz.~the action of the differential of the matrix exponential on a rank-one matrix.
Our closed form formula for the parameter gradient also involves the differential of a matrix exponential. Yet, we wish to evaluate the gradient itself rather than its action on another matrix. The special problem structure that we can exploit is the Kronecker sum of matrices. Accordingly, our approach is based on the recent corresponding work \cite{Benzi2017} and an additional subsequent low-rank approximation.

\subsection{Contribution, Organization}
We derive a closed form expression of the gradient of any $C^{1}$ loss function of the form \eqref{eq:loss-intro} that depends on the solution $V(t)$ of the linear system of ODEs \eqref{eq:LAF-intro} at some arbitrary but fixed time $t=T$. In addition, we develop a numerical method that enables to evaluate the gradient efficiently for the common large sizes of image labeling problems. We apply the method to optimal parameter estimation by Riemannian gradient descent and validate our approach by a series of proof-of-concept experiments. This includes a comparison with automatic differentiation applied to two numerical schemes for integrating the linearized assignment flow: geometric explicit Euler and exponential integration. It turns out that our method is as accurate and efficient as the highly optimized automatic differentiation software, like PyTorch~\cite{Paszke2019} and TensorFlow~\cite{Abadi2015}. We point out that to our knowledge, automatic differentiation has not been applied to exponential integration, so far.

This paper extends the conference paper \cite{Zeilmann:2021wt} in that all parameter dependencies of the loss function, constrained by the linearized assignment flow, are taken into account (cf.~diagram \eqref{eq:Omega-dependency}). In addition, a complete proof of the corresponding main result (Theorem \ref{thm:loss-function-gradient}) is provided. The space complexity of various gradient approximations are specified in a series of Remarks. The approach is validated numerically and more comprehensively by comparing to automatic differentiation and by examining the influence of all parameters.

The plan for this paper is as follows.
Section~\ref{sec:Preliminaries} summarizes
the assignment flow approach, the linearized assignment flow and exponential integration
for integrating the latter flow.
Section~\ref{sec:GradientApproximation} details the derivation of the exact gradient of
any loss function of the flow with respect to the weight parameters that regularize the flow. Furthermore, a low-rank approximation of the gradient is developed for evaluating the gradient efficiently. We also sketch how automatic derivation is applied to two numerical schemes in order to solve the parameter estimation problem in alternative ways. Numerical experiments are reported in Section \ref{sec:Experiments} for comparing the methods and for inspecting quantitatively the gradient approximation and properties of the estimated weight patches that parametrize the linearized assignment flow.
We conclude in Section \ref{sec:Conclusion} and point out further directions of research.

\section{Preliminaries}
\label{sec:Preliminaries}



\subsection{Basic Notation}
\label{sec:Basic-Notation}
We set $[n]=\{1,2,\dotsc,n\}$ for $n\in\N$. The cardinality of a finite set $S$ is denoted by $|S|$, e.g.~$|[n]|=n$. $\R^{n}_{+}$ denotes the positive orthant and $\R_{>}^{n}$ its interior. $\eins=(1,1,\dotsc,1)^{\T}$ has dimension depending on the context that we specify sometimes by a subscript, e.g.~$\eins_{n}\in\R^{n}$. Similarly, we set $0_{n}=(0,0,\dotsc,0)^{\T}\in\R^{n}$. $\{e_{i}\colon i\in[n]\}$ is the canonical basis of $\R^{n}$ and $I_{n}=(e_{1},\dotsc,e_{n})\in\R^{n\times n}$ the identity matrix.

The support of a vector $x\in \R^{n}$ is denoted by $\supp(x) = \{i\in[n]\colon x_{i}\neq 0\}$. $\Delta_{n}=\{p\in\R_{+}^{n}\colon\la\eins_{n},p\ra=1\}$ is the probability simplex whose points represent discrete distributions on $[n]$. Distributions with full support $[n]$ form the relative interior $\mathring\Delta_{n}=\Delta_{n}\cap\R_{>}^{n}$. $\la\cdot,\cdot\ra$ is the Euclidean inner product of vectors and matrices. In the latter case, this reads $\la A, B\ra = \tr(A^{\T} B)$ with the trace $\tr(A)=\sum_{i}A_{ii}$. The induced Frobenius norm is denoted by $\|A\|=\sqrt{\la A,A,\ra}$, and other matrix norms like the spectral norm $\|A\|_{2}$  are indicated by subscripts. The mapping $\Diag\colon\R^{n}\to\R^{n\times n}$ sends a vector $x$ to the diagonal matrix $\Diag(x)$ with entries $x$. $A\otimes B$ denotes the Kronecker \textit{product} of matrices $A$ and $B$ \cite{Graham:1981wj,Van-Loan:2000aa} and $\oplus$ the Kronecker \textit{sum}
\begin{equation}\label{eq:def-Kronecker-sum}
    A \oplus B = A \otimes I_{n} + I_{m} \otimes B \in \R^{m n\times m n},\qquad A \in \R^{m\times m},\quad B\in\R^{n\times n}.
\end{equation}
We have
\begin{equation}\label{eq:otimes-mixed}
    (A\otimes B)(C\otimes D) = (A C)\otimes (B D)
\end{equation}
for matrices of compatible dimensions.
The operator $\vvec_{r}$ turns a matrix into the vector by stacking the row vectors. It satisfies
\begin{equation}\label{eq:def-vecr}
    \vvec_{r}(A B C) = (A \otimes C^{\T})\vvec_{r}(B).
\end{equation}
The Kronecker product $v \otimes w \in \R^{mn}$ of two vectors $v \in \R^m$ and
$w \in \R^n$ is defined by viewing the vectors as matrices with only one column
and applying the definition of Kronecker products for matrices. We have
\begin{equation}\label{eq:otimes-vecr}
    v \otimes w = \vvec_{r}(v w^\top).
\end{equation}
The matrix exponential of a square matrix $A$ is given by \cite[Ch.~10]{Higham2008}
\begin{equation}
    \expm(A) = \sum_{k\geq 0} \frac{A^{k}}{k!}.
\end{equation}
$L(\mc{E}_{1},\mc{E}_{2})$ denotes the space of all linear bounded mappings from $\mc{E}_{1}$ to $\mc{E}_{2}$.

\subsection{Assignment Flow}\label{ssec:Assignment_Flow}

Let $G=(I,E)$ be a given undirected graph with vertices $i \in I$ indexing data
\begin{equation}\label{eq:mcF_I}
    \mc{F}_{I} = \{f_{i} \colon i \in I\} \subset \mc{F}
\end{equation}
given in a metric space $(\mc{F},d)$.
In this paper, we focus primarily on the application of image labeling in which the graph $G$ is a grid graph equipped with a $3 \times 3$ or larger neighborhood $\mc{N}_{i} = \{k \in I \colon ik=ki \in E\} \cup \{i\}$ at each pixel $ i\in I$.
The linearized assignment flow and the learning approach in this paper can, however, also be applied to the case of data labeling on arbitrary graphs.

Along with $\mc{F}_{I}$, \textit{prototypical data (labels)}
$\mc{L}_{J} = \{l_{j} \in \mc{F} \colon j \in J\}$ are given that represent
classes $j = 1,\dotsc,|J|$.
\textit{Supervised image labeling} denotes the task to assign precisely one
prototype $l_{j}$ to each datum $f_{i}$ at every vertex $i$ in a coherent way,
depending on the label assignments in the neighborhoods $\mc{N}_{i}$.
These assignments at $i$ are represented by probability vectors
\begin{equation}\label{eq:def-Wi}
    W_{i} \in \mathring{\Delta}_{|J|},\quad
    i \in I.
\end{equation}
The set $\mathring{\Delta}_{|J|}$ becomes a
Riemannian manifold denoted by $\mc{S} := (\mathring\Delta_{|J|},g_{\sst{FR}})$ when endowed with the
Fisher-Rao metric $g_{\sst{FR}}$.
Collecting all assignment vectors as \textit{rows} defines the strictly positive
row-stochastic \textit{assignment matrix}
\begin{equation}\label{eq:def-mcW}
    W = {(W_{1},\dotsc,W_{|I|})}^{\T} \in \mc{W} = \mc{S} \times \dots \times \mc{S} \subset \R^{|I| \times |J|},
\end{equation}
that we regard as point on the product \textit{assignment manifold} $\mc{W}$.
Image labeling is accomplished by geometrically integrating the
\textit{assignment flow} $W(t)$ solving
\begin{equation}\label{eq:def-dot-W}
    \dot W = R_{W}\big(S(W)\big),\qquad
    W(0) = \eins_{\mc{W}} := \frac{1}{|J|} \eins_{|I|} \eins_{|J|}^{\T}\qquad (\text{barycenter}),
\end{equation}
where $R_{W}$ and $S(W)$ are defined in \eqref{eq:def-R-p} resp.~\eqref{eq:def-Si}.
The assignment flow provably converges towards a binary matrix~\cite{Zern:2020aa},
i.e.~$\lim_{t\to\infty}W_{i}(t)=e_{j(i)}$, for every $i\in I$ and some $j(i)\in J$,
which yields the label assignment $f_{i} \mapsto l_{j(i)}$.
In practice, geometric integration is terminated when $W(t)$ is $\veps$-close to
an integral point using the entropy criterion from~\cite{Astrom:2017ac}, followed
by trivial rounding, due to the existence of basins of attraction around each integral point  \cite{Zern:2020aa}.

We specify the right-hand side of the differential equation in~\eqref{eq:def-dot-W} --- see \eqref{eq:RW-SW} and \eqref{eq:def-Si}
below --- and refer to~\cite{Astrom:2017ac,Schnorr:2019aa} for more details and the background.
With the tangent space
\begin{equation}\label{eq:def-T0}
    T_{0}=T_{p}\mc{S} = \{v\in\R^{|J|}\colon \la \eins,v\ra=0\},\qquad\forall p\in\mc{S},
\end{equation}
that does not depend on the base point
$p \in \mc{S}$, we define
\begin{subequations}\allowdisplaybreaks
    \begin{alignat}{2}
    \Pi_{0} \colon \R^{|J|} &\rightarrow T_{0}, &\quad z &\mapsto I_{|J|}-\frac{1}{|J|}\eins_{|J|}\eins_{|J|}^{\T}, \label{eq:def-Pi0} \\
        R_{p} \colon \R^{|J|} &\rightarrow T_{0}, &\quad z &\mapsto R_{p}(z)=\big(\Diag(p)-p p^{\T}\big) z,
        \label{eq:def-R-p} \\
        \Exp \colon \mc{S} \times T_{0} &\rightarrow \mc{S}, &\quad (p,v) &\mapsto
        \Exp_{p}(v) = \frac{e^{\frac{v}{p}}}{\la p, e^{\frac{v}{p}} \ra} p,
        \\
        \label{eq:def-Exp-inv}
        \Exp^{-1} \colon \mc{S} \times \mc{S} &\rightarrow T_{0}, &\quad (p,q) &\mapsto \Exp_{p}^{-1}(q)
        = R_{p}\log\frac{q}{p},
        \\ \label{eq:def-exp}
        \exp \colon \mc{S} \times \R^{|J|} &\rightarrow \mc{S}, &\quad (p,z) &\mapsto \exp_{p}(z) = \Exp_{p}\circ R_{p}(z) = \frac{p e^{z}}{\la p,e^{z}\ra},
    \end{alignat}
\end{subequations}
where multiplication, division, exponentiation $e^{(\cdot)}$ and $\log(\cdot)$
apply \textit{component-wise} to vectors.
Corresponding maps
\begin{equation}\label{eq:def-R-Exp-exp-product}
    R_{W}, \qquad \Exp_{W}, \qquad \exp_{W}
\end{equation}
in connection with the product
manifold~\eqref{eq:def-mcW} are defined analogously, and likewise the tangent
space
\begin{equation}\label{eq:def-mcT-0}
    \mc{T}_{0}=T_{0} \times \dots \times T_{0} = T_{W}\mc{W},\qquad \forall W\in\mc{W}
\end{equation}
and the extension of the orthogonal projection \eqref{eq:def-Pi0} onto $\mc{T}_{0}$, again denoted by $\Pi_{0}$.
For example, regarding \eqref{eq:def-dot-W}, with $W \in \mc{W}$ and $S(W)\in\mc{W}$ (or more generally $S \in \R^{|I|\times |J|}$), we have
\begin{subequations}\label{eq:RW-SW}
\begin{align}
R_{W}S(W)
&= \big(R_{W_{1}} S_{1}(W),\dotsc, R_{W_{|I|}} S_{|I|}(W)\big)^{\T}
= \vvec_{r}^{-1}\big(\Diag(R_{W})\vvec_{r}\big(S(W)\big)\big)
\intertext{with}\label{eq:def-Diag-R}
\Diag(R_{W}) &:= \bpm
R_{W_{1}} & 0 & \dotsb & 0 \\
0 & R_{W_{2}} & & \vdots \\
\vdots & & \ddots & 0 \\
0 & \dotsb & & R_{W_{|I|}}
\epm.
\end{align}
\end{subequations}

Given data $\mc{F}_{I}$ are taken into account as distance vectors
\begin{equation}\label{eq:def-Di}
D_{i}=\big(d(f_{i},l_{1}),\dotsc,d(f_{i},l_{|J|})\big)^{\T},\quad i\in I
\end{equation}
and mapped to $\mc{W}$ by
\begin{equation}\label{eq:def-Li}
   L(W) = \exp_{W}(-\tfrac{1}{\rho}D) \in \mc{W},\quad
   L_{i}(W_{i}) = \exp_{W_{i}}(-\tfrac{1}{\rho}D_{i}) = \frac{W_{i}e^{-\frac{1}{\rho} D_{i}}}{\la W_{i},e^{-\frac{1}{\rho} D_{i}}\ra},
\end{equation}
where $\rho > 0$ is a user parameter for normalizing the scale of the data.
These \textit{likelihood vectors} represent data terms in conventional
variational approaches: Each individual flow $\dot W_{i} = R_{W_{i}} L_{i}(W_{i})$,
$W_{i}(0)=\eins_{\mc{S}}$ converges to $e_{j(i)}$ with $j(i)=\arg\min_{j\in J} D_{ij}$
and in this sense maximizes the local data likelihood.

The vector field defining the assignment flow~\eqref{eq:def-dot-W} arises through
\textit{coupling} flows for individual pixels through \textit{geometric averaging}
within the neighborhoods $\mc{N}_{i},\,i\in I$, conforming to the underlying
Fisher-Rao geometry
\begin{subequations}\label{eq:def-Si}
    \begin{align}
        S(W) &= \bpm \vdots \\ {S_{i}(W)}^{\T} \\ \vdots \epm
        = \mathcal{G}^{\Omega}\big(L(W)\big) \in \mc{W},\qquad
        \label{eq:def-Si-a} \\ \label{eq:def-Si-b}
        S_{i}(W)
        &= \mathcal{G}^{\Omega}_{i}\big(L(W)\big) = \Exp_{W_{i}} \Big(\sum_{k \in \mc{N}_{i}} \w_{ik} \Exp_{W_{i}}^{-1}\big(L_{k}(W_{k})\big)\Big),\quad i \in I.
    \end{align}
\end{subequations}
The \textit{similarity vectors} $S_{i}(W)$ are parametrized by
strictly positive \textit{weight patches} $(\w_{ik})_{k\in\mc{N}_{i}}$, centered at $i\in I$ and indexed by local neighborhoods $\mc{N}_{i}\subset I$, that in turn define the \textit{weight parameter matrix}
\begin{equation}\label{eq:def-Omega}
    \Omega = {(\Omega_{i})}_{i\in I} \in \R_{+}^{|I|\times |I|},\qquad
    \Omega_{i}|_{\mc{N}_{i}} = {(\w_{ik})}_{k\in\mc{N}_{i}} \in \mathring\Delta_{|\mc{N}_{i}|},\qquad
    \sum_{k\in\mc{N}_{i}}\w_{ik}=1,\;\forall i\in I.
\end{equation}
The matrix $\Omega$ comprises all \textit{regularization parameters} satisfying the latter linear constraints.
Flattening these weight patches defines row vectors $\Omega_{i}|_{\mc{N}_{i}},\,i\in I$ and, by complementing with $0$, entries of the \textit{sparse} row vectors $\Omega_{i}$
of the matrix $\Omega$. Note that the positivity assumption $\w_{ik}>0$ is reflected by the membership $\Omega_{i}|_{\mc{N}_{i}} \in \mathring\Delta_{|\mc{N}_{i}|}$. Throughout this paper, we assume that all pixels have neighborhoods of equal size
\begin{equation}\label{eq:ass-mcN}
    |\mc{N}| := |\mc{N}_{i}|,\quad\forall i\in I
\end{equation}
and therefore simply write $\Omega_{i}|_{\mc{N}} = \Omega_{i}|_{\mc{N}_{i}}$.
These parameters are used in the linearized assignment flow, to be introduced next.
We explain a corresponding parameter estimation approach in Section \ref{sec:GradientApproximation}
and a parameter predictor in Section~\ref{ssec:Parameter_Prediction}.

\subsection{Linearized Assignment Flow}\label{sec:LAF}

The \textit{linearized assignment flow}, introduced by~\cite{Zeilmann2020},
approximates~\eqref{eq:def-dot-W} by
\begin{equation}\label{eq:LAF}
    \dot W = R_{W}\Big(S(W_{0}) + dS_{W_{0}}R_{W_{0}} \log\frac{W}{W_{0}}\Big),
    \quad W(0)=W_{0} \in \mc{W}
\end{equation}
around any point $W_{0}$.
In what follows, we only consider the barycenter \begin{equation}\label{eq:def-W0}
W_{0}=\eins_{\mc{W}}
\end{equation}
which is the initial point of~\eqref{eq:def-dot-W}.
The differential equation~\eqref{eq:LAF} is still \textit{nonlinear} but can be
parametrized by a \textit{linear} ODE on the tangent space
\begin{subequations}\label{eq:LAF-V}
    \begin{align}
        W(t) &= \Exp_{W_{0}}\big(V(t)\big),
        \label{eq:LAF-V-a} \\ \label{eq:LAF-V-b}
        \dot V &= R_{W_{0}}\big(S(W_{0}) + dS_{W_{0}} V\big)
        =: B_{W_0} + A(\Omega)V,\quad V(0)=0,
    \end{align}
\end{subequations}
where matrix $A(\Omega)$ linearly depends on the parameters $\Omega$
of~\eqref{eq:def-Si}. The action of $A(\Omega)$ on $V$ is explicitly given by \cite[Prop.~4.4]{Zeilmann2020}
\begin{subequations}\label{eq:A-Omega}
\begin{align}
A(\Omega) V
&=R_{W_{0}}dS_{W_{0}}V
= R_{S(W_{0})}\Omega V
\overset{\eqref{eq:RW-SW}}{=}
\vvec_{r}^{-1}\big(\Diag(R_{S(W_{0})})\vvec_{r}(\Omega V)\big)
\\
&= \bigg(R_{S_{1}(W_{0})}\sum_{k\in\mc{N}_{1}}\w_{1k}V_{k},\dotsc,R_{S_{|I|}(W_{0})}\sum_{k\in\mc{N}_{|I|}}\w_{|I|k}V_{k}\bigg)^{\T},
\end{align}
\end{subequations}
where $\Diag(R_{S(W_{0})})$ is defined by \eqref{eq:def-Diag-R} and we took into account \eqref{eq:def-W0}.
The linear ODE~\eqref{eq:LAF-V-b} admits a closed-form solution which in turn
enables a different numerical approach (Section~\ref{ssec:Exponential_Integration})
and a novel approach to parameter learning (Section~\ref{sec:GradientApproximation}).

\subsection{Exponential Integration}\label{ssec:Exponential_Integration}
The solution to~\eqref{eq:LAF-V-b} is given by a high-dimensional integral (Duhamel's formula) whose  value in closed form is given by
\begin{equation}\label{eq:V-t-vphi}
    V(t;\Omega) = t \varphi\big(tA(\Omega)\big) B_{W_0},
    \qquad
    \vphi(x) = \frac{e^{x}-1}{x}
    = \sum_{k=0}^{\infty} \frac{x^{k}}{(k+1)!},
\end{equation}
where the entire function $\varphi$
is extended to matrix arguments as the limit of an absolutely convergent power series in the matrix space \cite[Theorem 6.2.8]{Horn:1991tx}.
As the matrix $A$ is already very large even for medium-sized images, however,
it is not feasible in practice to compute $\varphi(tA)$ in this way.
Exponential integration~\cite{Hochbruck:1997aa,Niesen2012}, therefore, was used
by \cite{Zeilmann2020} for approximately evaluating~\eqref{eq:V-t-vphi}, as sketched next.

Applying the row-stacking operator~\eqref{eq:def-vecr}
to both sides of~\eqref{eq:LAF-V-b}
and~\eqref{eq:V-t-vphi}, respectively, yields with
\begin{equation}
v = \vvec_{r}(V)
\end{equation}
the ODE~\eqref{eq:LAF-V-b} in the form
\begin{subequations}\label{eq:vvec-v}\allowdisplaybreaks
    \begin{align}
        \label{eq:vvec-v-a}
        \dot v &= b + A^{J}(\Omega) v,& v(0)&=0, \qquad b = b(\Omega) = \vvec_{r}(B_{W_0}) \in \R^{n}, \\
        \label{eq:vvec-v-b}
        A^J(\Omega) &= {\big(A^J_{ik}(\Omega)\big)}_{i,k \in I} \in \R^{n\times n},\quad
        &A^J_{ik}(\Omega) &= \begin{cases}
            \w_{ik} R_{S_{i}(W_{0})},
            & k \in \mc{N}_{i}, \\
            0, & k \not\in\mc{N}_{i}.
        \end{cases} \\
        \label{eq:vvec-v-c}
        v(t;\Omega)
        &= t\vphi\big(t A^{J}(\Omega)\big) b, &
        n &:= \dim v(t;\Omega) = |I| |J|,
    \end{align}
\end{subequations}
where $A^{J}(\Omega)$ results from
\begin{subequations}\label{eq:AJ-Omega}
\begin{align}
\vvec_{r}\big(A(\Omega) V\big)
\overset{\eqref{eq:A-Omega}}&{=}
\Diag(R_{S(W_{0})})\vvec_{r}(\Omega V)
= \Diag(R_{S(W_{0})})(\Omega\otimes I_{|J|}) v
\\
&= A^{J}(\Omega) v.
\end{align}
\end{subequations}

Using the Arnoldi iteration~\cite{Saad2003} with initial vector $q_{1}=b/\|b\|$,
we determine an orthonormal basis
$Q_{m}=(q_{1},\dotsc,q_{m}) \in \Reals^{n\times m}$ of the Krylov space
$\mathcal{K}_m(A^{J}, b)$ of dimension $m$.
As will be validated in Section \ref{sec:Experiments}, choosing $m\leq 10$ yields sufficiently
accurate approximations of the actions of the matrix exponential $\expm$ and the
$\vphi$ operator on a vector, respectively, that are given by
\begin{subequations}
    \begin{align}
        \expm\big(tA^{J}(\Omega)\big)b
        &\approx \|b\| Q_{m} \expm(t H_{m})e_1,\qquad
        H_{m}=Q_{m}^{\T}A^{J}(\Omega)Q_{m},
        \label{eq:Krylov-expm} \\ \label{eq:Krylov-vphi}
        t \varphi\big(tA^{J}(\Omega)\big)b
        &\approx t \|b\| Q_{m} \varphi(t H_{m})e_1.
    \end{align}
\end{subequations}
The expression $\varphi(t H_{m})e_1$ results from computing the left-hand side
of the relation~\cite[Section 10.7.4]{Higham2008}
\begin{equation}\label{eq:expm-0}
    \expm\bpm
    t H_{m} & e_{1} \\ 0 & 0
    \epm
    = \bpm
    \expm(t H_{m}) & \vphi(t H_{m}) e_{1}
    \\ 0 & 1
    \epm
\end{equation}
and extracting the upper-right vector.
Since $H_{m}$ is a small matrix, any standard method~\cite{Moler2003} can be
used for computing the matrix exponential on the left-hand side.

\section{Parameter Estimation}
\label{sec:GradientApproximation}

Section \ref{sec:Learning-Procedure} details our approach for learning optimal weight parameters
for a given image and ground truth labeling:  Riemannian gradient descent is performed with respect to a loss function that depends on the solution of the linearized assignment flow. A closed form expression of this gradient is derived in Section \ref{sec:Loss-Function-Gradient} along with a low-rank approximation in Section \ref{sec:Gradient-Approximation} that can be computed efficiently. As an alternative and baseline, we outline in Section \ref{ssec:AutoDiff} two gradient approximations based on numerical schemes for integrating the linearized assignment flow and automatic differentiation.

\subsection{Learning Procedure}\label{sec:Learning-Procedure}
Let
\begin{equation}\label{eq:def-P-Omega}
    P_{\Omega} = \{\Omega \in \R_{+}^{|I|\times |I|}\colon \Omega\;\text{satisfies~\eqref{eq:def-Omega}}\}
\end{equation}
denote the space of weight parameter matrices that parametrize the similarity
mapping~\eqref{eq:def-Si}.
Due to~\eqref{eq:def-Omega} and~\eqref{eq:ass-mcN}, the restrictions
$\Omega_{i}|_{\mc{N}}$ are strictly positive probability vectors, as are the
assignment vectors $W_{i}$ defined by~\eqref{eq:def-Wi}.
Therefore, similar to $W_{i}\in\mc{S}$, we consider each $\Omega_{i}|_{\mc{N}}$ as
point on a corresponding manifold $(\Delta_{|\mc{N}|},g_{\sst{FR}})$ equipped
with the Fisher-Rao metric and --- in this sense --- regard $P_{\Omega}$ in \eqref{eq:def-P-Omega} as
corresponding product manifold.

Let $W^{\ast}\in\mc{W}$ denote the ground truth labeling for a given image, and
let $V^{\ast} = \Pi_0 W^{\ast} \in\mc{T}_{0}$ be a tangent vector such that
$\lim_{s\to\infty} \Exp_{\eins_{\mc{W}}}(s V^{\ast}) = W^{\ast}$.
Our objective is to determine $\Omega$ such that, for some specified time $T>0$, the vector
\begin{equation}\label{eq:def-V_T}
    V_{T}(\Omega) := V(T;\Omega),
\end{equation}
given by~\eqref{eq:V-t-vphi} and corresponding to the linearized assignment flow,
approximates the \textit{direction} of $V^{\ast}$ and hence
\begin{equation}\label{eq:from-V-to-W}
    \lim_{s\to\infty}\Exp_{\eins_{\mc{W}}}\big(s V_{T}(\Omega)\big) = W^{\ast}.
\end{equation}
In this formula the direction of the vector $V_{T}(\Omega)$ only is relevant, but not its magnitude.
A distance function that also satisfies these properties is given by
\begin{equation}\label{eq:cosineSimilarity}
    f_{\mc{L}}\colon\mc{T}_{0}\to\R,\qquad
    V\mapsto 1-\frac{\la V^{\ast},V\ra}{\|V^{\ast}\|\|V\|}.
\end{equation}
In addition, we consider a regularizer
\begin{equation}\label{eq:R-Omega}
    \mc{R}\colon P_{\Omega} \to \R,\qquad
    \Omega\mapsto \frac{\tau}{2} \sum_{i\in I}\|t_{i}(\Omega)\|^{2},\qquad
    t_{i}(\Omega) = \exp_{\eins_{\Omega}}^{-1}(\Omega_{i}|_{\mc{N}}),\qquad \tau > 0
\end{equation}
and define the loss function
\begin{equation}\label{eq:Omega-loss}
    \mc{L}\colon P_{\Omega}\to\R,\qquad
    \mc{L}(\Omega) = f_{\mc{L}}\big(V_{T}(\Omega)\big) + \mc{R}(\Omega),
\end{equation}
with $V_{T}(\Omega)$ from \eqref{eq:def-V_T}. $\Omega$ is determined by the Riemannian gradient descent sequence
\begin{equation}\label{eq:R-descent-iteration}
    \Omega^{(k+1)} = \exp_{\Omega^{(k)}}\big(-h\nabla\mc{L}(\Omega^{(k)})\big),\quad
    k\geq 0,\qquad \Omega^{(0)}_{i}|_{\mc{N}} = \eins_{|\mc{N}|},\quad i\in I
\end{equation}
with step size $h>0$. Here
\begin{equation}\label{eq:R-grad-mcL}
\nabla\mc{L}(\Omega)=R_{\Omega}\partial\mc{L}(\Omega)
\end{equation}
is the Riemannian gradient with respect to the Fisher-Rao metric.
$R_{\Omega}$ is given by~\eqref{eq:def-R-Exp-exp-product} and~\eqref{eq:def-R-p}
and effectively applies to the restrictions $\Omega_{i}|_{\mc{N}}$ of the row
vectors with all remaining components equal to $0$.
It remains to compute the Euclidean gradient $\partial\mc{L}(\Omega)$ of the loss function \eqref{eq:Omega-loss} which is presented in the subsequent Section \ref{sec:Loss-Function-Gradient}.


\subsection{Loss Function Gradient}\label{sec:Loss-Function-Gradient}

In Section \ref{sec:Closed-Form-Gradient} we derive a closed form expression for the loss function
gradient (Theorem \ref{thm:loss-function-gradient}), after introducing some basic calculus rules for representing and computing
differentials of matrix-valued mappings in Section~\ref{sec:Matrix-Differentials}.

\subsubsection{Matrix Differentials}\label{sec:Matrix-Differentials}
Let $F \colon \R^{m_{1}\times m_{2}} \to \R^{n_{1}\times n_{2}}$ be a smooth mapping.
Using the canonical identification $T\mc{E} \cong \mc{E}$ of the tangent spaces
of any Euclidean space $\mc{E}$ with $\mc{E}$ itself, we both represent and
compute the differential
\begin{equation}\label{eq:dF}
    dF\colon \R^{m_{1}\times m_{2}} \to L(\R^{m_{1}\times m_{2}},\R^{n_{1}\times n_{2}})
\end{equation}
in terms of a vector-valued mapping $f$, which is defined by $F$ according to the commutative diagram
\begin{equation}
    \begin{tikzcd}
        {\Reals^{m_{1} \times m_{2}}} & & {\Reals^{n_{1} \times n_{2}}} \\
        & \substack{L(\R^{m_{1}\times m_{2}},\; \R^{n_{1}\times n_{2}}) \\ \cong\; \R^{n_{1}n_{2}\times m_{1}m_{2}}} \\
        {\Reals^{m_{1}m_{2}}} & & {\Reals^{n_{1}n_{2}}}
        \arrow["F", from=1-1, to=1-3]
        \arrow["f", from=3-1, to=3-3]
        \arrow["{\vvec_r}", from=1-1, to=3-1]
        \arrow["{\vvec_r}"', from=1-3, to=3-3]
        \arrow["dF", from=1-1, to=2-2]
        \arrow["df", from=3-1, to=2-2]
    \end{tikzcd}
\end{equation}
In formulas, this means that based on the equation
\begin{equation}\label{eq:vecr-F-f-vecr}
    \vvec_{r}\big(F(X)\big)=f\big(\vvec_{r}(X)\big),\quad\forall  X\in\R^{m_{1}\times m_{2}},
\end{equation}
we set
\begin{equation}\label{eq:action-dF-df}
    \vvec_{r}\big(dF(X)Y)
    = df\big(\vvec_{r}(X)\big)\vvec_{r}(Y),\qquad
    \forall X, Y \in \R^{m_{1}\times m_{2}}
\end{equation}
and hence \textit{define} and compute the differential~\eqref{eq:dF} as matrix-valued mapping
\begin{equation}\label{eq:dF-by-df}
    dF := df \circ \vvec_{r}.
\end{equation}
The corresponding linear actions on $Y\in\R^{m_{1}\times m_{2}}$ and
$\vvec_{r}(Y)\in \R^{m_{1}m_{2}}$, respectively, are given by~\eqref{eq:action-dF-df}.
We state an auxiliary result required in the next subsection, which also provides a first concrete instance of the general relation \eqref{eq:action-dF-df}.
\begin{lemma}[\textbf{differential of the matrix exponential}]\label{lem:dexpm}
    If $F = \expm\colon\R^{n\times n}\to\R^{n\times n}$, then~\eqref{eq:action-dF-df} reads
    \begin{equation}
        \vvec_{r}\big(d\expm(X) Y\big)
        = \big(\expm(X)\otimes I_{n}\big)\vphi(-X\oplus X^{\T})\vvec_{r}(Y),\quad Y\in\R^{n\times n},
    \end{equation}
    with $\vphi$ given by~\eqref{eq:V-t-vphi}.
\end{lemma}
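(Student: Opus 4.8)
The plan is to reduce the statement to the classical integral representation of the Fréchet derivative of the matrix exponential,
\[
d\expm(X)Y = \int_{0}^{1} e^{sX}\,Y\,e^{(1-s)X}\,\dif s ,
\]
and then translate it into the Kronecker‑product language prescribed by \eqref{eq:action-dF-df}. This representation is standard, but if one wants a self‑contained derivation it follows from differentiating the power series $\expm(X+tY)=\sum_{k\geq 0}(X+tY)^{k}/k!$ termwise at $t=0$: the Leibniz rule gives $\tfrac{d}{dt}\big|_{t=0}(X+tY)^{k}=\sum_{j=0}^{k-1}X^{j}YX^{k-1-j}$, termwise differentiation being legitimate since the series of derivatives converges uniformly for $t$ in a bounded interval, so $d\expm(X)Y=\sum_{k\geq 1}\tfrac{1}{k!}\sum_{j=0}^{k-1}X^{j}YX^{k-1-j}$, which one recognizes as the integral above.

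Next I would apply the row‑stacking operator. By \eqref{eq:def-vecr} we have $\vvec_{r}\big(e^{sX}Ye^{(1-s)X}\big)=\big(e^{sX}\otimes e^{(1-s)X^{\T}}\big)\vvec_{r}(Y)$, and moving $\vvec_{r}$ (a fixed linear isomorphism) and the resulting bounded operator inside the integral yields
\[
\vvec_{r}\big(d\expm(X)Y\big)=\Big(\int_{0}^{1} e^{sX}\otimes e^{(1-s)X^{\T}}\,\dif s\Big)\vvec_{r}(Y).
\]
The algebraic core is then to evaluate the integrand. Using the mixed‑product rule \eqref{eq:otimes-mixed}, $e^{sX}\otimes e^{(1-s)X^{\T}}=\big(e^{sX}\otimes I_{n}\big)\big(I_{n}\otimes e^{(1-s)X^{\T}}\big)=e^{s(X\otimes I_{n})}\,e^{(1-s)(I_{n}\otimes X^{\T})}$; since $X\otimes I_{n}$ and $I_{n}\otimes X^{\T}$ commute, the exponents add and we may rewrite $s(X\otimes I_{n})+(1-s)(I_{n}\otimes X^{\T})=(X\otimes I_{n})-(1-s)\big(X\otimes I_{n}-I_{n}\otimes X^{\T}\big)$, so the integrand equals $e^{(X\otimes I_{n})}\,e^{-(1-s)(X\otimes I_{n}-I_{n}\otimes X^{\T})}$. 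Pulling out the $s$‑independent factor, substituting $u=1-s$, and using $\int_{0}^{1}e^{-uM}\,\dif u=\vphi(-M)$ (with $\vphi$ from \eqref{eq:V-t-vphi} extended to matrices as a convergent power series), we get $\int_{0}^{1} e^{sX}\otimes e^{(1-s)X^{\T}}\,\dif s=(e^{X}\otimes I_{n})\,\vphi\big(I_{n}\otimes X^{\T}-X\otimes I_{n}\big)$. Finally $e^{X}\otimes I_{n}=\expm(X)\otimes I_{n}$ and, by the definition \eqref{eq:def-Kronecker-sum} of the Kronecker sum, $I_{n}\otimes X^{\T}-X\otimes I_{n}=(-X)\otimes I_{n}+I_{n}\otimes X^{\T}=(-X)\oplus X^{\T}$, which is exactly the asserted identity.

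The routine ingredients — uniform convergence/Fubini to move $\vvec_{r}$ and the operator under the integral, and the power‑series definition of $\vphi$ on matrices — are harmless. I regard the only delicate point, and hence the main (minor) obstacle, as the Kronecker bookkeeping in the core step: one must keep the left and right tensor factors straight, invoke commutativity of $X\otimes I_{n}$ with $I_{n}\otimes X^{\T}$ in precisely the right place, and perform the $s\mapsto 1-s$ reparametrization that converts the naive output $(I_{n}\otimes e^{X^{\T}})\,\vphi\big(X\oplus(-X^{\T})\big)$ into the stated form — in particular observing that ``$-X\oplus X^{\T}$'' in the statement means $(-X)\oplus X^{\T}$, not $-(X\oplus X^{\T})$. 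A purely series‑based alternative is available and avoids the integral entirely: after $\vvec_{r}$ the double sum becomes $\big(\sum_{p,q\geq 0}X^{p}\otimes(X^{\T})^{q}/(p+q+1)!\big)\vvec_{r}(Y)$, and one matches the coefficient of $X^{p}\otimes(X^{\T})^{q}$ against $(\expm(X)\otimes I_{n})\vphi\big((-X)\oplus X^{\T}\big)$ using the beta‑integral identity $\sum_{s=0}^{p}(-1)^{s}\binom{p}{s}/(q+s+1)=p!\,q!/(p+q+1)!$; this route is more computational but uses nothing beyond \eqref{eq:otimes-mixed} and the definition of $\oplus$.
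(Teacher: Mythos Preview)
Your derivation is correct. The integral representation of the Fr\'echet derivative, the application of \eqref{eq:def-vecr}, the commutativity of $X\otimes I_{n}$ and $I_{n}\otimes X^{\T}$, and the identification $\int_{0}^{1}e^{-uM}\dif u=\vphi(-M)$ all check out, and your remark that $-X\oplus X^{\T}$ is to be read as $(-X)\oplus X^{\T}$ is the right parsing of the statement.

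The paper takes a different, much shorter route: it simply invokes \cite[Thm.~10.13]{Higham2008}, which already gives the Kronecker form of $d\expm$ with respect to the \emph{column}-stacking operator, and then notes that one only has to rearrange the factors to match the row-stacking convention $\vvec_{r}$ used here. So the paper treats the lemma as a restatement of a known result, whereas you supply a self-contained proof from the integral formula (and sketch a second, purely series-based proof). Your approach buys independence from the external reference and makes the Kronecker bookkeeping explicit; the paper's approach is a one-line citation but leaves the column-to-row translation to the reader.
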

\begin{proof}
    The result follows from~\cite[Thm. 10.13]{Higham2008} where columnwise
    vectorization is used, after rearranging so as to conform to the row-stacking
    mapping $\vvec_{r}$ used in this paper.
\end{proof}

\subsubsection{Closed-Form Gradient Expression}\label{sec:Closed-Form-Gradient}

We separate the computation of $\mathcal{L}(\Omega)$ and the gradient $\partial \mathcal{L}(\Omega)$ into several operations that were introduced in sections
\ref{sec:Preliminaries} and \ref{sec:Learning-Procedure}.
We illustrate their composition and accordingly the process from parameters $\Omega$ to a loss $\mc{L}(\Omega)$ in the following flow diagram that refers to quantities in \eqref{eq:vvec-v} and \eqref{eq:AJ-Omega} related to the linearized assignment flow, after vectorization.
\begin{equation}\label{eq:Omega-dependency}
    \begin{tikzcd}
        \Omega & {S(W_0) = \exp_{\eins_\mathcal{W}}\left(-\frac{1}{\rho} \Omega D\right)} & {b(\Omega)=\vvec_r(R_{W_0}S(W_0))} \\
        & {A^J(\Omega)=\Diag(R_{S(W_0)})(\Omega \otimes I_{|J|})} & {v_{T}(\Omega)=T\varphi\left(TA^J(\Omega)\right)b(\Omega)} \\
        & {\mc{R}(\Omega)} & {\mc{L}(\Omega) = f_{\mathcal{L}}(v_{T}(\Omega))+\mc{R}(\Omega)}
        \arrow["\text{(M1)}", from=1-1, to=1-2]
        \arrow["\text{(M2)}", from=1-2, to=1-3]
        \arrow["\text{(M3)}"{pos=0.7}, from=1-1, to=2-2]
        \arrow[from=1-2, to=2-2]
        \arrow["\text{(M4)}", from=1-3, to=2-3]
        \arrow["\text{(M4)}", from=2-2, to=2-3]
        \arrow[from=2-3, to=3-3]
        \arrow["\text{(M5)}", swap, from=1-1, to=3-2,
            start anchor={south}, out=270,
            end anchor={west}, in=180
        ]
        \arrow[from=3-2, to=3-3]
    \end{tikzcd}
\end{equation}
In what follows, we traverse this diagram from top-left to bottom-right and collect each partial result by a corresponding lemma or proposition. Theorem \ref{thm:loss-function-gradient} assembles all results and provides a closed form expression of the loss function gradient $\partial\mc{L}(\Omega)$. To enhance readability, the proofs of most Lemmata are listed in Appendix \ref{app:closed-form-gradient}.

We focus on mapping (M1) in diagram \eqref{eq:Omega-dependency}.
\begin{lemma}\label{lem:A1}
The differential of the function
\begin{equation}\label{eq:f-A1}
f_{1} \colon \R^{|I| \times |I|} \to \R^{|I| \times |J|},\qquad
\Omega \mapsto f_{1}(\Omega)
:= S(W_{0}) = \exp_{\eins_{\mc{W}}}\Big(-\frac{1}{\rho} \Omega D\Big),\qquad
D\in\R^{|I|\times |J|}
\end{equation}
and its transpose are given by
\begin{subequations}\label{eq:df-A1}
\begin{align}
df_{1}(\Omega) Y
&= -\frac{1}{\rho} R_{f_{1}(\Omega)} (Y D),\qquad \forall Y\in\R^{|I|\times |I|},
\label{eq:df-A1-a} \\ \label{eq:df-A1-b}
df_{1}(\Omega)^{\T} Z
&= -\frac{1}{\rho}R_{f_{1}(\Omega)}(Z) D^{\T},\qquad \forall
Z\in\R^{|I|\times |J|},
\end{align}
\end{subequations}
with $R_{f_{1}(\Omega)}$ defined by \eqref{eq:RW-SW}.
\end{lemma}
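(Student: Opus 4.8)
The plan is to recognize $f_{1}$ as the composition $\exp_{\eins_{\mc{W}}}\circ g$, where $g\colon\Omega\mapsto-\frac{1}{\rho}\Omega D$ is linear, differentiate by the chain rule, and then read off the transpose by an adjointness computation with respect to the Frobenius inner product.

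First I would compute the differential of the single-site softmax-type map $\exp_{p}\colon\R^{|J|}\to\mc{S}$, $\exp_{p}(z)=pe^{z}/\la p,e^{z}\ra$ from \eqref{eq:def-exp}. Differentiating this quotient componentwise gives, with $q:=\exp_{p}(z)$,
\[
    \partial_{z_{j}}\big(\exp_{p}(z)\big)_{i}=q_{i}\delta_{ij}-q_{i}q_{j},
\]
that is, $d(\exp_{p})_{z}=\Diag(q)-qq^{\T}=R_{q}$ in the notation of \eqref{eq:def-R-p}; in particular $d(\exp_{p})_{z}$ maps into $T_{0}$, since $\la\eins,q\ra=1$ forces $\eins^{\T}R_{q}=0$. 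Passing to the product manifold, $\exp_{\eins_{\mc{W}}}$ applies $\exp_{\eins_{\mc{S}}}$ row by row, so its differential at any $Z\in\R^{|I|\times|J|}$ acts row-wise and equals $R_{\exp_{\eins_{\mc{W}}}(Z)}$ in the sense of \eqref{eq:RW-SW}.

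Since $g$ is linear with $dg(\Omega)Y=-\frac{1}{\rho}YD$ and $\exp_{\eins_{\mc{W}}}(g(\Omega))=f_{1}(\Omega)$, the chain rule gives
\[
    df_{1}(\Omega)Y=d(\exp_{\eins_{\mc{W}}})_{g(\Omega)}\big(-\tfrac{1}{\rho}YD\big)=-\tfrac{1}{\rho}R_{f_{1}(\Omega)}(YD),
\]
which is \eqref{eq:df-A1-a}. For the transpose I would note that $df_{1}(\Omega)$ equals $-\tfrac{1}{\rho}R_{f_{1}(\Omega)}$ precomposed with $Y\mapsto YD$, and compute the Frobenius adjoint of each factor: the adjoint of $Y\mapsto YD$ is $M\mapsto MD^{\T}$, since $\tr(D^{\T}Y^{\T}M)=\tr(Y^{\T}MD^{\T})$; and $R_{f_{1}(\Omega)}$ is self-adjoint on $\R^{|I|\times|J|}$ because it is block-diagonal over rows with each block $R_{f_{1}(\Omega)_{i}}=\Diag(f_{1}(\Omega)_{i})-f_{1}(\Omega)_{i}f_{1}(\Omega)_{i}^{\T}$ symmetric. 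Composing the two adjoints in reverse order yields $df_{1}(\Omega)^{\T}Z=-\tfrac{1}{\rho}R_{f_{1}(\Omega)}(Z)D^{\T}$, i.e.\ \eqref{eq:df-A1-b}.

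The argument is essentially routine calculus; the two places that warrant care are (i) getting the differential of $\exp_{p}$ right — equivalently, recalling that the Jacobian of the (weighted) softmax at $z$ is exactly $R_{\exp_{p}(z)}$ — and (ii) the bookkeeping for the product/row-wise structure: one must check that the row-wise operator $R_{f_{1}(\Omega)}$ genuinely is self-adjoint on matrices, and make sure the chain rule is evaluated at the base point $g(\Omega)=-\frac{1}{\rho}\Omega D$ rather than at $\Omega$ or at $f_{1}(\Omega)$.
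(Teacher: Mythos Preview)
Your proof is correct and follows essentially the same route as the paper for \eqref{eq:df-A1-a}: chain rule applied to $\exp_{\eins_{\mc{W}}}\circ g$, using that the differential of $\exp_{p}$ at $z$ is $R_{\exp_{p}(z)}$ (which the paper cites from \cite[Lemma 4.5]{Zeilmann2020} rather than rederiving). For \eqref{eq:df-A1-b} you take a slightly more direct path: you compute the Frobenius adjoint factor-by-factor (adjoint of $Y\mapsto YD$ is $M\mapsto MD^{\T}$, and $R_{f_{1}(\Omega)}$ is self-adjoint by block-diagonal symmetry), whereas the paper passes through the $\vvec_{r}$/Kronecker representation $\vvec_{r}(df_{1}(\Omega)Y)=-\tfrac{1}{\rho}\Diag(R_{f_{1}(\Omega)})(I_{|I|}\otimes D^{\T})\vvec_{r}(Y)$ and transposes that matrix explicitly. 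Your argument is a bit more elementary; the paper's has the advantage of producing the vectorized form \eqref{eq:vec-df1-Y} as a by-product, which is reused verbatim in the proof of Lemma~\ref{lem:A2}.
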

\noindent
\textit{Proof:} see Appendix \ref{app:closed-form-gradient}. \\[0.1cm]

We consider mapping (M2) of diagram \eqref{eq:Omega-dependency}, taking into account mapping (M4) and notation \eqref{eq:f-A1}.
\begin{lemma}\label{lem:A2}
The differential of the function
\begin{equation}\label{eq:f-A2}
f_{2}\colon\R^{|I|\times |I|} \to \R^{|I|^{2}},\qquad
\Omega\mapsto f_{2}(\Omega)
:= b(\Omega) = \vvec_{r}\big(R_{W_{0}}f_{1}(\Omega)\big)
\end{equation}
and its transpose are given by
\begin{subequations}\label{eq:df-A2}
\begin{align}
df_{2}(\Omega) Y &= \vvec_{r}\big(R_{W_{0}}df_{1}(\Omega)Y\big),\qquad\forall Y\in\R^{|I|\times |I|}
\label{eq:df-A2-a} \\ \label{eq:df-A2-b}
df_{2}(\Omega)^{\T} Z
&= df_{1}(\Omega)^{\T}(R_{W_{0}} Z),\qquad\qquad
\forall Z\in\R^{|I|\times |I|}.
\end{align}
\end{subequations}
\end{lemma}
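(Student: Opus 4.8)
The plan is to exploit that $f_2$ in \eqref{eq:f-A2} is a composition of $f_1$ from Lemma \ref{lem:A1} with two \emph{linear} maps: writing $\Lambda\colon\R^{|I|\times|J|}\to\R^{|I|\times|J|}$, $V\mapsto R_{W_0}V$ (linear by \eqref{eq:RW-SW}) and recalling that $\vvec_r$ is linear, we have $f_2=\vvec_r\circ\Lambda\circ f_1$. First I would apply the chain rule; since the differential of a linear map is that map itself, this gives at once $df_2(\Omega)Y=\vvec_r\big(\Lambda\,df_1(\Omega)Y\big)=\vvec_r\big(R_{W_0}\,df_1(\Omega)Y\big)$, which is \eqref{eq:df-A2-a}, with $df_1(\Omega)Y$ supplied explicitly by \eqref{eq:df-A1-a}.

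For the transpose \eqref{eq:df-A2-b} I would take adjoints through this factorization, using that taking adjoints reverses the composition order: $df_2(\Omega)^\T=df_1(\Omega)^\T\circ\Lambda^\T\circ\vvec_r^\T$. Two elementary facts then finish the argument. First, $\vvec_r$ is an isometry from $(\R^{|I|\times|J|},\la\cdot,\cdot\ra)$ onto Euclidean space, so $\vvec_r^\T=\vvec_r^{-1}$ and the row-stacking identification lets us feed $\Lambda^\T$ a matrix argument $Z$ directly. Second, $\Lambda$ is self-adjoint with respect to the Frobenius inner product: by \eqref{eq:def-R-p} each diagonal block $R_{W_i}$ is represented by the symmetric matrix $\Diag(W_i)-W_iW_i^\T$, whence $\la R_{W_0}V,U\ra=\sum_{i\in I}\la R_{W_i}V_i,U_i\ra=\la V,R_{W_0}U\ra$, i.e.\ $\Lambda^\T=\Lambda$. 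Substituting both facts yields $df_2(\Omega)^\T Z=df_1(\Omega)^\T\big(R_{W_0}Z\big)$, and the right-hand side is then made fully explicit via \eqref{eq:df-A1-b}.

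This lemma is essentially bookkeeping, so I do not expect a genuine obstacle. The only delicate point is consistency with the convention \eqref{eq:dF-by-df}: one must check that computing $df_2$ from the vectorised representative $f$ as prescribed by \eqref{eq:vecr-F-f-vecr}--\eqref{eq:action-dF-df} agrees with the matrix-level chain rule used above. This holds because $\vvec_r$ intertwines $F$ with $f$ by construction and $\Lambda$ is compatible with that intertwining, so the two routes to $df_2$ coincide; making this explicit is the small amount of care the proof requires.
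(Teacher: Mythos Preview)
Your argument is correct. For \eqref{eq:df-A2-a} you and the paper proceed identically: linearity of $\vvec_r$ and of the map $V\mapsto R_{W_0}V$ makes the chain rule immediate. For \eqref{eq:df-A2-b} the routes diverge slightly. The paper expands $df_2(\Omega)Y$ explicitly as a matrix acting on $\vvec_r(Y)$, namely $-\frac{1}{\rho}\Diag(R_{W_0})\Diag(R_{f_1(\Omega)})(I_{|I|}\otimes D^\T)\vvec_r(Y)$, then transposes this concrete matrix and simplifies step by step using the symmetry of $\Diag(R_{W_0})$ and $\Diag(R_{f_1(\Omega)})$ together with \eqref{eq:RW-SW}. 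You instead take adjoints directly through the factorization $df_2(\Omega)=\vvec_r\circ\Lambda\circ df_1(\Omega)$, invoking that $\vvec_r$ is an isometry and that $\Lambda$ is self-adjoint. Both arguments rest on the same fact (symmetry of the blocks $R_{W_i}$); yours is more conceptual and avoids the intermediate $\Diag(\cdot)$ bookkeeping, while the paper's explicit version has the advantage that it produces along the way the concrete matrix representation of $df_2(\Omega)$, which is consistent with the convention \eqref{eq:dF-by-df} and convenient for the later computations in Proposition~\ref{prop:diff-vT}.
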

\noindent
\textit{Proof:} see Appendix \ref{app:closed-form-gradient}. \\[0.1cm]
We note that $d f_{2}(\Omega)^{\T}$ should act on a vector $\vvec_{r}(Z)\in\R^{|I|^{2}}$. We prefer the more compact and equivalent non-vectorized expression \eqref{eq:df-A2-b}.

We turn to mapping (M3) of diagram \eqref{eq:Omega-dependency} and use \eqref{eq:Omega-dependency}.
\begin{lemma}\label{lem:A3}
The differential of the mapping
\begin{equation}
f_{3}\colon\R^{|I|\times |I|}\to\R^{n\times n},\qquad
\Omega\mapsto f_{3}(\Omega)
:= A^{J}(\Omega)
= \Diag(R_{f_{1}(\Omega)})(\Omega\otimes I_{|J|}),\qquad n=|I| |J|
\end{equation}
is given by
\begin{subequations}\label{eq:df-A3}
\begin{align}
df_{3}(\Omega) Y
&= \Diag(d R_{f_{1}(\Omega)} Y)(\Omega\otimes I_{|J|})
+ \Diag(R_{f_{1}(\Omega)})(Y\otimes I_{|J|}),\qquad\forall Y\in\R^{|I|\times |I|}.
\intertext{
Here, $\Diag(d R_{f_{1}(\Omega)} Y)\in\R^{n\times n}$ is defined by \eqref{eq:def-Diag-R} and $|I|$ block matrices of size $|J|\times |J|$ on the diagonal of the form
}
d R_{f_{1i}(\Omega)} Y
&= \Diag\big(d f_{1i}(\Omega) Y\big)-\big(df_{1i}(\Omega) Y\big)f_{1i}(\Omega)^{\T} - f_{1i}(\Omega) \big(df_{1i}(\Omega) Y\big)^{\T},\quad i \in I,
\intertext{
where $d f_{1i}(\Omega) Y$ is given by
}
(d R_{f_{1i}(\Omega)} Y) S_{i}
&= \big((d R_{f_{1}(\Omega)} Y) S\big)_{i},\quad i\in I
\end{align}
\end{subequations}
for any $S = (\dotsc,S_{i},\dotsc)^{\T}\in\R^{|I|\times |J|}$ and by \eqref{eq:df-A1-a}.
\end{lemma}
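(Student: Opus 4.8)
\medskip\noindent\textbf{Proof idea.}
The plan is to differentiate the product $f_{3}(\Omega)=\Diag(R_{f_{1}(\Omega)})\,(\Omega\otimes I_{|J|})$ by the product rule, using that the right factor depends linearly on $\Omega$ while the left factor depends on $\Omega$ only through $f_{1}$. Since $\Omega\mapsto\Omega\otimes I_{|J|}$ is linear, its differential in direction $Y$ is simply $Y\otimes I_{|J|}$, which already produces the second summand $\Diag(R_{f_{1}(\Omega)})(Y\otimes I_{|J|})$ of \eqref{eq:df-A3}. It then remains to differentiate the first factor $\Omega\mapsto\Diag(R_{f_{1}(\Omega)})$.

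For the first factor I would exploit that, by \eqref{eq:def-Diag-R} and \eqref{eq:RW-SW}, $\Diag(R_{f_{1}(\Omega)})$ is block diagonal with diagonal blocks $R_{f_{1i}(\Omega)}$, $i\in I$, where $f_{1i}(\Omega)\in\R^{|J|}$ denotes the $i$-th row of $f_{1}(\Omega)=S(W_{0})$ read as a column vector. Writing the first factor as the composition of $\Omega\mapsto f_{1}(\Omega)$ with the block-diagonal assembly map $S\mapsto\Diag(R_{S})$ and applying the chain rule, the differential of $\Omega\mapsto\Diag(R_{f_{1}(\Omega)})$ is again block diagonal, its $i$-th diagonal block being the differential of $p\mapsto R_{p}$ at $p=f_{1i}(\Omega)$ evaluated in the direction $df_{1i}(\Omega)Y$. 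This direction is precisely the $i$-th row of $df_{1}(\Omega)Y$, hence given by \eqref{eq:df-A1-a} of Lemma \ref{lem:A1}; this is exactly what the last line of \eqref{eq:df-A3} records, together with the block-diagonality statement $(dR_{f_{1i}(\Omega)}Y)S_{i}=((dR_{f_{1}(\Omega)}Y)S)_{i}$ for arbitrary $S\in\R^{|I|\times|J|}$.

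The only remaining computation is the differential of $p\mapsto R_{p}$. From \eqref{eq:def-R-p} one has $R_{p}=\Diag(p)-p p^{\T}$, so the linearity of $\Diag$ together with the product rule applied to $p\mapsto p p^{\T}$ gives, for a direction $\dot p\in\R^{|J|}$,
\[
    dR_{p}(\dot p)=\Diag(\dot p)-\dot p\, p^{\T}-p\,\dot p^{\T}.
\]
Setting $p=f_{1i}(\Omega)$ and $\dot p=df_{1i}(\Omega)Y$ yields the middle display of \eqref{eq:df-A3}, and collecting this block-diagonal first term with the second term $\Diag(R_{f_{1}(\Omega)})(Y\otimes I_{|J|})$ completes the argument. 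I expect no single step to be genuinely hard; the main obstacle is the bookkeeping --- keeping the row-stacking conventions of Section \ref{sec:Matrix-Differentials} consistent with the block structure of $\Diag(R_{f_{1}(\Omega)})$, in particular checking that the differential of a block-diagonal matrix-valued map is block diagonal with the differentials of its blocks and that $df_{1i}(\Omega)Y$ is exactly the $i$-th row of $df_{1}(\Omega)Y$.
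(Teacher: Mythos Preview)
Your proposal is correct and follows essentially the same route as the paper: product rule on $\Diag(R_{f_{1}(\Omega)})\,(\Omega\otimes I_{|J|})$, linearity of $\Omega\mapsto\Omega\otimes I_{|J|}$ for the second summand, and differentiation of $p\mapsto R_{p}=\Diag(p)-pp^{\T}$ blockwise via the chain rule through $f_{1}$ for the first. The only cosmetic difference is that the paper justifies the identity $d\Diag(R_{f_{1}(\Omega)})Y=\Diag(dR_{f_{1}(\Omega)}Y)$ by passing through the vectorization relation $\Diag(R_{f_{1}(\Omega)})\vvec_{r}(S)=\vvec_{r}(R_{f_{1}(\Omega)}S)$ and differentiating both sides, whereas you plan to argue it directly from the block-diagonal structure; both arguments are equivalent and equally short.
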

\noindent
\textit{Proof:} see Appendix \ref{app:closed-form-gradient}. \\[0.1cm]

We focus on the differential of the vector-valued mapping $v_{T}(\Omega)\in\R^{n}$
of~\eqref{eq:Omega-dependency} with $n$ given by~\eqref{eq:vvec-v-c}.
We utilize the fact that analogous to~\eqref{eq:expm-0}, the vector
\begin{subequations}\label{eq:vT-Omega-by-mcA-Omega}
\begin{align}
v_{T}(\Omega)
&=T\vphi(T A^{J}(\Omega))b(\Omega)
= (I_{n},0_{n}) \expm\big(\mc{A}(\Omega)\big)e_{n+1}
\intertext{
can be extracted from the last column
of the matrix} \label{eq:expm-AOmega}
\expm\big(\mc{A}(\Omega)\big) &= \bpm
\expm\big(T A^{J}(\Omega)\big) & v_{T}(\Omega) \\
0_{n}^{\T} & 1
\epm,\qquad
\mc{A}(\Omega) = \bpm T A^{J}(\Omega) & T b(\Omega) \\
0_{n}^{\T} & 0 \epm.
\end{align}
\end{subequations}
By means of relation~\eqref{eq:vecr-F-f-vecr}, we associate a vector-valued
function $f_{\mc{A}}$ with the matrix-valued mapping $\mc{A}$ through
\begin{equation}\label{eq:def-fA}
    \vvec_{r}\big(\mc{A}(\Omega)\big)
    = f_{\mc{A}}\big(\vvec_{r}(\Omega)\big)
\end{equation}
and record for later that, for any matrix $Y\in\R^{|I|\times|I|}$, equation~\eqref{eq:action-dF-df} implies
\begin{equation}\label{eq:dA-Omega}
    \vvec_{r}\big(d\mc{A}(\Omega)Y\big)
    = df_{\mc{A}}\big(\vvec_{r}(\Omega)\big)\vvec_{r}(Y).
\end{equation}
\begin{lemma}\label{lem:dmcA-Omega}
The differential of the mapping $\mc{A}$ in \eqref{eq:expm-AOmega} is given by
\begin{equation}\label{eq:dmcA-Omega}
d\mc{A}(\Omega) Y = T \bpm
d f_{3}(\Omega) & d f_{2}(\Omega) \\
0_{n}^{\T} & 0
\epm \left(\bpm 1 \\ 1 \epm \otimes Y\right),\qquad \forall Y\in\R^{|I|\times |I|}.
\end{equation}
\end{lemma}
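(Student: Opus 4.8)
The plan is to read off $d\mc{A}(\Omega)$ directly from the block structure of $\mc{A}(\Omega)$ in \eqref{eq:expm-AOmega}. The matrix $\mc{A}(\Omega)$ is $(n+1)\times(n+1)$ and its only $\Omega$-dependent entries are the top-left $n\times n$ block $T A^{J}(\Omega)=T f_{3}(\Omega)$ and the top-right $n\times 1$ block $T b(\Omega)=T f_{2}(\Omega)$; the last row $(0_{n}^{\T},0)$ is constant. Since the differential of a matrix-valued map is taken entrywise and is linear, I would simply conclude
\[
d\mc{A}(\Omega)Y=\bpm T\,df_{3}(\Omega)Y & T\,df_{2}(\Omega)Y\\ 0_{n}^{\T} & 0\epm,\qquad Y\in\R^{|I|\times|I|},
\]
where $df_{3}(\Omega)Y\in\R^{n\times n}$ and $df_{2}(\Omega)Y\in\R^{n}$ are the differentials supplied by Lemmata~\ref{lem:A3} and~\ref{lem:A2}, slotted into the corresponding blocks, while the constant last row contributes nothing.

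The remaining work is cosmetic: I would rewrite this identity in the compact operator form of the statement. The Kronecker product $\bpm 1\\ 1\epm\otimes Y=\bpm Y\\ Y\epm$ is exactly the device that hands the \emph{same} argument $Y$ to both operators in the block row $\bpm df_{3}(\Omega) & df_{2}(\Omega)\epm$, their images being \emph{concatenated} (not summed) to form the top block row; padding by the zero row $\bpm 0_{n}^{\T} & 0\epm$ and factoring out $T$ then reproduces the displayed formula verbatim. Alternatively, if one insists on the vectorized formalism of Section~\ref{sec:Matrix-Differentials}, one can verify \eqref{eq:dA-Omega} by expressing $df_{\mc{A}}(\vvec_{r}(\Omega))$ as a composition of $df_{2}(\Omega)$, $df_{3}(\Omega)$ with the row-stacking permutation that reorders $\vvec_{r}$ of a block matrix, but the blockwise argument above is the shorter route.

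I expect no analytic obstacle here, since every substantive differential --- those of $f_{1}$, $f_{2}$, $f_{3}$, and of $\expm$ through Lemma~\ref{lem:dexpm} --- has already been established. The only point requiring care is bookkeeping: keeping the three dimensions $n+1$, $n$, and $|I|$ apart, and making sure the block-operator notation together with the $\bpm 1\\ 1\epm\otimes Y$ shorthand is read consistently with the conventions \eqref{eq:vecr-F-f-vecr}--\eqref{eq:dF-by-df}. This lemma is essentially the assembly step that packages $df_{2}$ and $df_{3}$ into the $\expm$-friendly augmented matrix $\mc{A}$, preparing the ground for differentiating $v_{T}(\Omega)=(I_{n},0_{n})\expm(\mc{A}(\Omega))e_{n+1}$ via Lemma~\ref{lem:dexpm}.
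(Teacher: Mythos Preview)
Your proposal is correct and matches the paper's own proof essentially verbatim: the paper simply notes that \eqref{eq:dmcA-Omega} is immediate from the blockwise differential $d\mc{A}(\Omega)Y=\bpm T\,dA^{J}(\Omega)Y & T\,db(\Omega)Y\\ 0_{n}^{\T} & 0\epm$ together with Lemmata~\ref{lem:A2} and~\ref{lem:A3}. Your additional remark that the Kronecker factor $\bpm 1\\1\epm\otimes Y$ is merely a bookkeeping device for feeding the same $Y$ to both block operators is accurate and, if anything, slightly more explicit than the paper.
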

\begin{proof}
Equation \eqref{eq:dmcA-Omega} is immediate due to
\begin{equation}
d\mc{A}(\Omega) = \bpm
T d A^{J}(\Omega) Y & T d b(\Omega) Y \\
0_{n}^{\T} & 0
\epm
\end{equation}
and Lemmata \ref{lem:A2} and \ref{lem:A3}.
\end{proof}
Now we are in the position to specify the differential of the solution to the linearized assignment flow with respect to the regularizing weight parameters.
\begin{proposition}\label{prop:diff-vT}
Let
\begin{equation}\label{eq:def-f-vT}
f_{4}(\Omega) := v_{T}(\Omega):=v(T;\Omega)
\end{equation}
denote the solution \eqref{eq:vvec-v-c} in
vectorized form to the ODE~\eqref{eq:LAF-V-b}.
Then the differential is
given according to the convention~\eqref{eq:dF-by-df} by
\begin{subequations}\label{eq:dvT-Omega-explizit}
\begin{align}\label{eq:dvT-Omega-explizit-a}
&d f_{4}(\Omega)Y
= T \Big(
d\big(\vphi\big(T A^{J}(\Omega)\big)b(\Omega)\big)
+ \vphi\big(T A^{J}(\Omega)\big) d f_{2}(\Omega)\Big) Y
\intertext{where}
&d\big(\vphi\big(T A^{J}(\Omega)\big)b(\Omega)\big) Y \label{eq:dvT-Omega-explizit-b} \\
\label{eq:dvT-Omega-explizit-c}
&= \Big(\big(\expm(T A^{J}(\Omega)),v_{T}(\Omega)\big)\otimes e_{n+1}^{\T}\Big)\vphi\big(-\mc{A}(\Omega)\oplus \mc{A}(\Omega)^{\T}\big) \cdot
df_{\mc{A}}\big(\vvec_{r}(\Omega)\big)\vvec_{r}(Y),
\\ \label{eq:dvT-Omega-explizit-d}
&\qquad
\forall Y\in\R^{|I|\times |I|},
\end{align}
\end{subequations}
where $A^{J}(\Omega)$ is given by~\eqref{eq:vvec-v-b}, $\mc{A}(\Omega)$
    by~\eqref{eq:expm-AOmega}, $d f_{\mc{A}}$ by \eqref{eq:dA-Omega} and Lemma \ref{lem:dmcA-Omega}, and $df_{2}$ by Lemma \ref{lem:A2}.
\end{proposition}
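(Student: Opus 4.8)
The plan is to differentiate the closed form $f_{4}(\Omega)=v_{T}(\Omega)=T\vphi\big(TA^{J}(\Omega)\big)b(\Omega)$ from \eqref{eq:vvec-v-c} by combining the product rule, the augmented-matrix representation \eqref{eq:vT-Omega-by-mcA-Omega}, and the chain rule, feeding in Lemmata \ref{lem:dexpm}, \ref{lem:dmcA-Omega}, \ref{lem:A3} and \ref{lem:A2}. First I would apply the product rule to $\vphi\big(TA^{J}(\Omega)\big)\,b(\Omega)$, splitting $df_{4}(\Omega)Y$ into (i) the term obtained by differentiating the matrix-valued map $\Omega\mapsto\vphi\big(TA^{J}(\Omega)\big)$ and multiplying by the then fixed vector $b(\Omega)$, and (ii) the term obtained by keeping $\vphi\big(TA^{J}(\Omega)\big)$ and differentiating $b=f_{2}$. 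Term (ii) is read off at once from Lemma \ref{lem:A2} and yields the summand $T\vphi\big(TA^{J}(\Omega)\big)df_{2}(\Omega)Y$ in \eqref{eq:dvT-Omega-explizit-a}.

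For term (i) I would use that, by \eqref{eq:vT-Omega-by-mcA-Omega} and \eqref{eq:expm-AOmega}, the vector $\vphi\big(TA^{J}(\Omega)\big)b(\Omega)$ is the upper-right block of $\expm\big(\mc{A}(\Omega)\big)$ and is recovered by the constant linear extraction $M\mapsto(I_{n},0_{n})\,M\,e_{n+1}$; hence this contribution is the differential of the composition of the affine map $\Omega\mapsto\mc{A}(\Omega)$ (with its $(1,2)$-block $b$ held fixed), the matrix exponential, and that extraction. Differentiating, it equals $(I_{n},0_{n})\,\big(d\expm\big(\mc{A}(\Omega)\big)\big[\,d\mc{A}(\Omega)Y\,\big]\big)\,e_{n+1}$. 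I would then invoke Lemma \ref{lem:dexpm} with $X=\mc{A}(\Omega)$ to replace $d\expm$ by left multiplication with $\big(\expm(\mc{A}(\Omega))\otimes I_{n+1}\big)\vphi\big(-\mc{A}(\Omega)\oplus\mc{A}(\Omega)^{\T}\big)$ on the row-stacked direction, and Lemma \ref{lem:dmcA-Omega} together with \eqref{eq:dA-Omega} to write $\vvec_{r}\big(d\mc{A}(\Omega)Y\big)=df_{\mc{A}}\big(\vvec_{r}(\Omega)\big)\vvec_{r}(Y)$ and to expose its block structure through $df_{3}$ (Lemma \ref{lem:A3}) and $df_{2}$. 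Finally I would push the constant extraction past the Kronecker factors: by \eqref{eq:def-vecr}, $(I_{n},0_{n})\,M\,e_{n+1}=\big((I_{n},0_{n})\otimes e_{n+1}^{\T}\big)\vvec_{r}(M)$; by the mixed-product rule \eqref{eq:otimes-mixed}, $\big((I_{n},0_{n})\otimes e_{n+1}^{\T}\big)\big(\expm(\mc{A}(\Omega))\otimes I_{n+1}\big)=\big((I_{n},0_{n})\expm(\mc{A}(\Omega))\big)\otimes e_{n+1}^{\T}$; and by the block form \eqref{eq:expm-AOmega}, $(I_{n},0_{n})\expm(\mc{A}(\Omega))=\big(\expm(TA^{J}(\Omega)),v_{T}(\Omega)\big)$. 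This produces \eqref{eq:dvT-Omega-explizit-c}, and adding term (ii) gives \eqref{eq:dvT-Omega-explizit-a}--\eqref{eq:dvT-Omega-explizit-d}.

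The step I expect to be the main obstacle is the bookkeeping around the augmented $(n+1)\times(n+1)$ system. One must keep the row-stacking convention consistent through the extra row and column (Lemma \ref{lem:dexpm} is already phrased for $\vvec_{r}$, so this is mechanical but needs to be tracked carefully), and, more delicately, one must ensure the $\Omega$-dependence entering through $b(\Omega)$ is accounted for exactly once: it may either be folded into $d\mc{A}(\Omega)$ via Lemma \ref{lem:dmcA-Omega} or split off as the explicit product-rule term, with $b$ frozen in the augmented matrix of term (i) accordingly. The one genuinely non-formal check is that the contribution from perturbing the $(1,2)$-block of the block-triangular matrix $\mc{A}(\Omega)$ collapses to exactly $T\vphi\big(TA^{J}(\Omega)\big)df_{2}(\Omega)Y$; this holds because $\expm$ of $\bsm X & c \\ 0 & 0 \esm$ has upper-right block $\vphi(X)\,c$, which depends linearly on $c$.
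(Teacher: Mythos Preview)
Your route is essentially the paper's: represent $v_{T}(\Omega)=(I_{n},0_{n})\expm(\mc{A}(\Omega))e_{n+1}$, chain-rule through $\expm$ via Lemma~\ref{lem:dexpm}, and then collapse $\big((I_{n},0_{n})\otimes e_{n+1}^{\T}\big)\big(\expm(\mc{A}(\Omega))\otimes I_{n+1}\big)$ using \eqref{eq:otimes-mixed} and the block form \eqref{eq:expm-AOmega}. Your caveat about not double-counting the $b(\Omega)$-dependence is well placed; in the paper's own computation the \emph{full} $d\mc{A}(\Omega)$ from Lemma~\ref{lem:dmcA-Omega} (which already carries $df_{2}$) is fed through, so the augmented-matrix calculation there in fact delivers all of $df_{4}(\Omega)Y$ in one stroke, and your alternative of freezing $b$ in $\mc{A}$ and adding the explicit product-rule term is an equivalent bookkeeping choice.
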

\begin{proof}
Equation \eqref{eq:dvT-Omega-explizit-a} follows directly from equation \eqref{eq:vvec-v-c} and Lemma \ref{lem:A2} makes explicit the second summand on the right-hand side. It remains to compute the first summand.
Using~\eqref{eq:vT-Omega-by-mcA-Omega} and the chain rule, we have for any $Y\in\R^{|I|\times|I|}$,
\begin{subequations}
\begin{align}
d\big(T\vphi(T A^{J}(\Omega))b(\Omega)\big) Y
&= (I_{n},0_{n}) d\expm\big(\mc{A}(\Omega)\big)\big(d\mc{A}(\Omega)Y \big) e_{n+1}.
\intertext{
Applying $\vvec_{r}$ to both sides which does not change the vector on the left-hand side, yields by~\eqref{eq:def-vecr}}
d\big(T\vphi(T A^{J}(\Omega))b(\Omega)\big) Y           &= \big((I_{n},0_{n})\otimes e_{n+1}^{\T}\big)\vvec_{r}\big(d\expm\big(\mc{A}(\Omega)\big)(d\mc{A}(\Omega) Y)\big).
            \intertext{Applying Lemma~\ref{lem:dexpm} and~\eqref{eq:dA-Omega}, we obtain}
d\big(T\vphi(T A^{J}(\Omega))b(\Omega)\big) Y            &= \big((I_{n},0_{n})\otimes e_{n+1}^{\T}\big)
            \big(\expm\big(\mc{A}(\Omega)\big)\otimes I_{n+1}\big)
            \vphi\big(-\mc{A}(\Omega)\oplus \mc{A}(\Omega)^{\T}\big)
            \\ &\qquad
            \cdot df_{\mc{A}}\big(\vvec_{r}(\Omega)\big)\vvec_{r}(Y)
            \intertext{and using~\eqref{eq:otimes-mixed} and~\eqref{eq:expm-AOmega}}
            &= \Big(\big(\expm(T A^{J}(\Omega)),v_{T}(\Omega)\big)\otimes e_{n+1}^{\T}\Big)\vphi\big(-\mc{A}(\Omega)\oplus \mc{A}(\Omega)^{\T}\big)
            \\ &\qquad
            \cdot df_{\mc{A}}\big(\vvec_{r}(\Omega)\big)\vvec_{r}(Y).
            \qedhere
        \end{align}
    \end{subequations}
\end{proof}
We finally consider the regularizing mapping $\mc{R}(\Omega)$, defined by \eqref{eq:R-Omega} and corresponding to mapping (M5) in diagram \eqref{eq:Omega-dependency}. Here, we have to take into account the constraints \eqref{eq:def-Omega} imposed on $\Omega$. Accordingly, we define the corresponding set of tangent matrices
\begin{equation}\label{eq:def-mcY-Omega}
\mc{Y}_{\Omega} = \big\{Y\in\R^{|I|\times |I|}\colon \la\eins_{\mc{N}}, Y_{i}|_{\mc{N}}\ra = 0,\;\forall i\in I\big\}.
\end{equation}
\begin{lemma}\label{lem:dmcR-Omega}
The differential of the mapping $\mc{R}$ in \eqref{eq:R-Omega} is given by
\begin{equation}\label{eq:dmcR-Omega}
d\mc{R}(\Omega) Y
= \tau\sum_{i\in I}\Big\la t_{i}(\Omega), \Pi_{0}\Big(\frac{Y_{i}}{\Omega_{i}}\Big)\Big|_{\mc{N}}\Big\ra,\qquad\forall Y\in\mc{Y}_{\Omega}.
\end{equation}
\end{lemma}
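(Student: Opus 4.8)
The plan is to differentiate $\mc R$ through its definition \eqref{eq:R-Omega} by the chain rule: writing $\mc R(\Omega) = \tfrac{\tau}{2}\sum_{i\in I}\|t_i(\Omega)\|^2$, one gets $d\mc R(\Omega)Y = \tau\sum_{i\in I}\la t_i(\Omega), dt_i(\Omega)Y\ra$, so the whole content of the lemma is the computation of the differential of the single map $\Omega\mapsto t_i(\Omega) = \exp_{\eins_\Omega}^{-1}(\Omega_i|_{\mc N})$, followed by pairing against $t_i(\Omega)$ and summing over $i$.

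First I would make the lifting map $\exp_{\eins_\Omega}^{-1}$ explicit. Inverting the formula \eqref{eq:def-exp} for $\exp_p$ (on $\R^{|\mc N|}$ in place of $\R^{|J|}$) shows that, for $q$ with strictly positive entries, the unique preimage in $T_0 = \{v : \la\eins,v\ra = 0\}$ is $\exp_p^{-1}(q) = \Pi_0\log\frac{q}{p}$, with $\log$ and the quotient taken componentwise and $\Pi_0$ the orthogonal projection \eqref{eq:def-Pi0} onto $T_0$. Since the base point $\eins_\Omega$ is the \emph{fixed} barycenter of $\Delta_{|\mc N|}$ and $\Pi_0\eins = 0$, the scalar logarithm of the normalization constant is annihilated, so $t_i(\Omega) = \Pi_0\log(\Omega_i|_{\mc N})$. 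This presentation also exhibits $t_i$ as the restriction to the manifold $P_\Omega$ of a map smooth on an ambient neighborhood of $P_\Omega$ in $\R^{|I|\times|I|}$ (the entries $\Omega_i|_{\mc N}$ being strictly positive by \eqref{eq:def-Omega}), so it is legitimate to differentiate in the ambient Euclidean space and then restrict the result to tangent directions $Y\in\mc Y_\Omega$.

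Then I would differentiate directly: the map $\Omega\mapsto\Omega_i|_{\mc N}$ is linear with differential $Y\mapsto Y_i|_{\mc N}$, and composing with the componentwise logarithm and the linear map $\Pi_0$ gives $dt_i(\Omega)Y = \Pi_0\big((Y_i/\Omega_i)|_{\mc N}\big)$, the division being componentwise over the neighborhood entries. Substituting this into $d\mc R(\Omega)Y = \tau\sum_{i\in I}\la t_i(\Omega), dt_i(\Omega)Y\ra$ yields exactly \eqref{eq:dmcR-Omega}.

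There is no real obstacle beyond careful bookkeeping of the restriction $|_{\mc N}$ to the neighborhood coordinates (outside which $\Omega_i$ vanishes, so that $Y_i/\Omega_i$ is meaningful only after restriction), together with the observation — essential for seeing that no further term survives — that the base point of the $\exp^{-1}$ map does not depend on $\Omega$, which is exactly what makes $\Pi_0\eins = 0$ applicable. As a side remark one may note that, since $t_i(\Omega)\in T_0$ and $\Pi_0$ is self-adjoint and idempotent, the pairing in \eqref{eq:dmcR-Omega} also equals $\la t_i(\Omega), (Y_i/\Omega_i)|_{\mc N}\ra$; I would nevertheless keep $\Pi_0$ explicit to record that $dt_i(\Omega)Y$ lands in the tangent space $T_0$.
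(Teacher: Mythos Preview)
Your proposal is correct and follows essentially the same route as the paper: both invert $\exp_p$ to obtain $\exp_p^{-1}(q)=\Pi_0(\log q-\log p)$, differentiate in the second argument to get $d\exp_p^{-1}(q)u=\Pi_0(u/q)$, and then apply the chain rule to $\mc R(\Omega)=\tfrac{\tau}{2}\sum_i\|t_i(\Omega)\|^2$. Your additional remarks (fixed base point, self-adjointness of $\Pi_0$, care with the restriction $|_{\mc N}$) are sound and merely make explicit what the paper leaves implicit.
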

\noindent
\textit{Proof:} see Appendix \ref{app:closed-form-gradient}. \\[0.1cm]

Putting all results together, we state the main result of this section.
\begin{theorem}[\textbf{loss function gradient}]\label{thm:loss-function-gradient}
Let
\begin{equation}
\mc{L}(\Omega)
= f_{\mc{L}}\big(v_{T}(\Omega)\big)+\mc{R}(\Omega)
\end{equation}
be a continuously differentiable loss function, where $v_{T}(\Omega)$ given by \eqref{eq:vvec-v-c} is the vectorized solution to the linearized assignment flow \eqref{eq:LAF-V-b} at time $t=T$. Then its gradient $\partial \mc{L}(\Omega)$ is given by
\begin{subequations}\label{eq:partial-mcL-explicit}
\begin{align}\label{eq:partial-mcL-explicit-a}
\la \partial\mc{L}(\Omega),Y\ra
&= d\mc{L}(\Omega) Y,\qquad\forall Y\in\mc{Y}_{\Omega}
\intertext{with}\label{eq:partial-mcL-explicit-b}
d\mc{L}(\Omega)Y
&= \big\la\partial f_{\mc{L}}\big(v_{T}(\Omega)\big), d f_{4}(\Omega) Y\big\ra + d\mc{R}(\Omega) Y
\end{align}
\end{subequations}
and $d f_{4}(\Omega)$ given by \eqref{eq:dvT-Omega-explizit}, and with $d\mc{R}(\Omega) Y$ given by Lemma \ref{lem:dmcR-Omega}.
\end{theorem}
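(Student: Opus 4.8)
The plan is to derive \eqref{eq:partial-mcL-explicit} simply by assembling the chain-rule computation traced along the flow diagram \eqref{eq:Omega-dependency}: the substantive work has already been done in Lemmata \ref{lem:A1}--\ref{lem:dmcR-Omega} and Proposition \ref{prop:diff-vT}, so all that remains is to combine those differentials and to recognize the resulting linear functional as a gradient.

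First I would use linearity of the differential on the decomposition $\mc{L}(\Omega) = f_{\mc{L}}(v_{T}(\Omega)) + \mc{R}(\Omega)$, splitting $d\mc{L}(\Omega)Y$ into the two summands of \eqref{eq:partial-mcL-explicit-b}. For the first summand, since $f_{\mc{L}}$ is $C^{1}$ by hypothesis and $v_{T} = f_{4}$, the chain rule gives $d(f_{\mc{L}}\circ f_{4})(\Omega)Y = df_{\mc{L}}\big(v_{T}(\Omega)\big)\big(df_{4}(\Omega)Y\big)$; because $f_{\mc{L}}$ is scalar-valued its differential is represented by the Euclidean gradient, so $df_{\mc{L}}(v)(w) = \la \partial f_{\mc{L}}(v), w\ra$, which yields the inner-product term, with $df_{4}(\Omega)Y$ supplied in closed form by Proposition \ref{prop:diff-vT} (and, through it, by Lemmata \ref{lem:A2}, \ref{lem:A3}, \ref{lem:dmcA-Omega}, and \ref{lem:dexpm}). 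For the second summand I would simply quote Lemma \ref{lem:dmcR-Omega}, which already provides $d\mc{R}(\Omega)Y$ in closed form for admissible directions. Finally, the Euclidean gradient $\partial\mc{L}(\Omega)$ in the sense of \eqref{eq:R-grad-mcL} is by definition the representer of the linear functional $Y \mapsto d\mc{L}(\Omega)Y$ on the constraint tangent space, so \eqref{eq:partial-mcL-explicit-a} follows once the formula for $d\mc{L}(\Omega)Y$ has been established.

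The only delicate point -- and the closest thing to an obstacle here -- is the restriction to the constraint tangent space $\mc{Y}_{\Omega}$ of \eqref{eq:def-mcY-Omega}. The regularizer $\mc{R}$ is defined only on the manifold $P_{\Omega}$ of \eqref{eq:def-P-Omega}, so $d\mc{R}$, and hence the combined differential $d\mc{L}$, is meaningful only along directions $Y$ tangent to $P_{\Omega}$, whereas the composite $f_{\mc{L}}\circ f_{4}$ is defined on all of $\R^{|I|\times|I|}$ and its differential extends freely; restricting both contributions to $\mc{Y}_{\Omega}$ is therefore consistent and does not under-determine the stated formula. A secondary, purely bookkeeping concern is to keep the row-stacking convention of Section \ref{sec:Matrix-Differentials} uniform throughout, so that $df_{4}(\Omega)$ -- stated in \eqref{eq:dvT-Omega-explizit} for matrix arguments $Y\in\R^{|I|\times|I|}$ via $\vvec_{r}$ -- produces a vector in $\R^{n}$ that pairs correctly with $\partial f_{\mc{L}}(v_{T}(\Omega))\in\R^{n}$ in \eqref{eq:partial-mcL-explicit-b} without any further re-vectorization. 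Neither point requires a new estimate or construction.
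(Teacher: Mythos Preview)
Your proposal is correct and follows essentially the same approach as the paper: the paper's own proof is a two-line invocation of the definition of the gradient together with the chain rule and Proposition~\ref{prop:diff-vT}, which is precisely what you outline in more detail. Your additional remarks on the restriction to $\mc{Y}_{\Omega}$ and the $\vvec_{r}$ bookkeeping are sound elaborations that the paper leaves implicit.
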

\begin{proof}
The claim \eqref{eq:partial-mcL-explicit} follows from applying the definition of the gradient in \eqref{eq:partial-mcL-explicit-a} and evaluating the right-hand side using the chain rule and Proposition \ref{prop:diff-vT}, to obtain \eqref{eq:partial-mcL-explicit-b}.
\end{proof}

\subsection{Gradient Approximation}
\label{sec:Gradient-Approximation}

In this section, we discuss the complexity of the evaluation of the loss function gradient $\partial\mc{L}(\Omega)$ as given by \eqref{eq:partial-mcL-explicit}, and we develop a low-rank approximation \eqref{eq:approx-gradient-final} that is computationally feasible and efficient.

\subsubsection{Motivation}

We reconsider the gradient $\partial\mc{L}$ given by \eqref{eq:partial-mcL-explicit}.
The gradient involves the term $df_{4}(\Omega)Y$, given by \eqref{eq:dvT-Omega-explizit}, which comprises two summands. We focus on the computationally expensive first summand on the right-hand side of \eqref{eq:dvT-Omega-explizit-a} given by \eqref{eq:dvT-Omega-explizit-b}-\eqref{eq:dvT-Omega-explizit-c}, i.e., the term
\begin{alignat}{1}
	\underbrace{
		\Big(\big(\expm(T A^{J}(\Omega)),v_{T}(\Omega)\big)\otimes e_{n+1}^{\T}\Big)\vphi\big(-\mc{A}(\Omega)\oplus \mc{A}(\Omega)^{\T}\big) \cdot
		df_{\mc{A}}\big(\vvec_{r}(\Omega)\big)
	}_{=: C(\Omega)}
	\vvec_{r}(Y).
\end{alignat}
In order to evaluate the corresponding component of $\partial\mc{L}(\Omega)$ based on \eqref{eq:partial-mcL-explicit-b}, the matrix $C(\Omega)$ is transposed and multiplied
with $\partial f_{\mc{L}}(v_{T}(\Omega))$,
\begin{subequations}\label{eq:CT-partial-f}
    \begin{align}
        &C(\Omega)^{\T} \partial f_{\mc{L}}(v_{T}(\Omega))
        \\
        &= df_{\mc{A}}\big(\vvec_{r}(\Omega)\big)^{\T}
        \vphi\big(-\mc{A}(\Omega)^{\T}\oplus \mc{A}(\Omega)\big) \cdot
        \Big(\big(\expm(T A^{J}(\Omega)),v_{T}(\Omega)\big)^{\T}\otimes e_{n+1}\Big)\partial f_{\mc{L}}(v_{T}(\Omega))
        \\
        &= df_{\mc{A}}\big(\vvec_{r}(\Omega)\big)^{\T}
        \vphi\big(-\mc{A}(\Omega)^{\T}\oplus \mc{A}(\Omega)\big) \cdot
        \Big(\big(\expm(T A^{J}(\Omega)),v_{T}(\Omega)\big)^{\T}\otimes e_{n+1}\Big)
        \big(\partial f_{\mc{L}}(v_{T}(\Omega)) \otimes (1)\big)
        \\
        \overset{\eqref{eq:otimes-mixed}}&{=}
        df_{\mc{A}}\big(\vvec_{r}(\Omega)\big)^{\T}
        \vphi\big(-\mc{A}(\Omega)^{\T}\oplus \mc{A}(\Omega)\big) \cdot
        \Big(\big(\expm(T A^{J}(\Omega)),v_{T}(\Omega)\big)^{\T}
        \partial f_{\mc{L}}(v_{T}(\Omega)) \otimes e_{n+1}\Big).
    \end{align}
\end{subequations}
Thus, the matrix-valued function $\vphi$ defined by \eqref{eq:V-t-vphi} has to be evaluates at a Kronecker sum of matrices and then multiplied by a vector.
The structure of this expression has the general form
\begin{align}\label{eq:BS_Expression}
    f(M_{1}\oplus & M_{2}) (b_1 \otimes b_2),\qquad
    M_1, M_2 \in \Reals^{k \times k},\quad
    b_1, b_2 \in \Reals^k,
\end{align}
where in our case we have
\begin{subequations}\label{eq:BS_Expression_Notation}
\begin{align}
M_1 &= -{\mc{A}(\Omega)}^{\T},\qquad
M_2 = \mc{A}(\Omega), \qquad
k = n+1= |I||J| + 1,
\label{eq:def-k-gradient-component} \\ \label{eq:Benzi-notation-b1}
b_1 &= \big(\expm(T A^{J}(\Omega)),v_{T}(\Omega)\big)^\top \partial f_{\mc{L}}\big(v_{T}(\Omega)\big), \qquad
b_2 = e_{n+1}, \\
f &= \vphi.
\end{align}
\end{subequations}
As the following discussions also hold in the general setting~\eqref{eq:BS_Expression}, we derive our gradient approximation in this full generality. Afterwards, we apply our setting to the gradient approximation \eqref{eq:approx-gradient-final}.
First, we discuss two ways to compute \eqref{eq:BS_Expression}:
\begin{description}
    \item[Direct computation] Compute the Kronecker sum $M_{1} \oplus M_{2}$,
        evaluate the matrix function $\vphi$ and multiply the vector $b_1 \otimes b_2$.
        This approach has space and time complexity of at
        least $\mc{O}(k^4)$, with $k$ given by \eqref{eq:def-k-gradient-component}.
        The complexity might be even higher depending on how the function $f$ is
        evaluated.
    \item[Krylov subspace approximation] Use the Krylov space $\mc{K}_m(M_{1}\oplus M_{2}, b_1 \otimes b_2)$ for
        approximating \eqref{eq:BS_Expression}, as explained in
        Section~\ref{ssec:Exponential_Integration}.
        This approach has space complexity $\mc{O}(k^2 m^2)$ and time
        complexity $\mc{O}(k^2(m+1))$~\cite[p.~132]{Saad2011}.
\end{description}

\begin{remark}[\textbf{space complexity}]\label{rmk:StorageNaiveKrylov}
    Consider an image with $512 \times 512$ pixels ($|I| = 262\,144$),
    $|J|=10$ labels (i.e. $k = |I||J| + 1 = 2\,621\,441$) and using $8$ bytes per
    number. Then the direct computation requires to store more than $10^{14}$ terabytes of data.
    The Krylov subspace approximation (with $m=10$) is significantly cheaper, but still
    requires to store more than $5000$ terabytes.
    Hence both methods are computationally infeasible especially in view of the fact
    that~\eqref{eq:BS_Expression} has to be recomputed in every step of the
    gradient descent procedure \eqref{eq:R-descent-iteration}.
\end{remark}

\subsubsection{An Approximation by Benzi and Simoncini}
To reduce the memory footprint, we employ an approximation for
computing~\eqref{eq:BS_Expression}, first discussed by Benzi and
Simoncini~\cite{Benzi2017}, and refine it using a new additional approximation in
Section~\ref{ssec:Low_Rank_Approximation}.
In the following, the notation from Benzi and Simoncini is slightly adapted to our definition \eqref{eq:def-Kronecker-sum} of the Kronecker sum that differs from Benzi and Simoncini's definition of the Kronecker sum
($A \oplus B = B \otimes I + I \otimes A$).

The approach uses the Arnoldi iteration \cite{Saad2003}
to determine orthonormal bases $P_m$, $Q_m$ and the corresponding Hessenberg
matrices $T_1$ and $T_2$ of the two Krylov subspaces $\mc{K}(M_1, b_1)$, $\mc{K}(M_2, b_2)$.
The matrices are connected by a standard relation of Krylov subspaces~\cite[Section 13.2.1]{Higham2008},
\begin{subequations}\label{eq:Krylov-approx}
\begin{align}
    M_1 P_{m}
    &= P_{m} T_{1} + t_{1} p_{m+1} e_{m}^{\T},
    \\
    M_2 Q_{m}
    &= Q_{m} T_{2} + t_{2} q_{m+1} e_{m}^{\T},
\end{align}
\end{subequations}
where $t_1 \in \R$, $p_{m+1} \in \R^n$ (resp. $t_2 \in \R$, $q_{m+1} \in \R^n$)
refer to the entries of the Hessenberg matrices and the orthonormal bases in the
next step of the Arnoldi iteration.
With these formulas we deduce
\begin{subequations}
\begin{align}
    (M_1 \oplus M_2) (P_m \otimes Q_m)
    \overset{\eqref{eq:def-Kronecker-sum}}&{=}
    (M_1P_m \otimes Q_m) + (P_m \otimes M_2 Q_m) \\
    \overset{\eqref{eq:Krylov-approx}}&{=}
    (P_{m} T_{1} + t_{1} p_{m+1} e_{m}^{\T} \otimes Q_m) + (P_m \otimes Q_{m} T_{2} + P_m \otimes t_{2} q_{m+1} e_{m}^{\T}) \\
    &= (P_m \otimes Q_m)(T_1 \oplus T_2) +
    ( t_{1} p_{m+1} e_{m}^{\T} \otimes Q_m) + (P_m \otimes t_{2} q_{m+1} e_{m}^{\T}).
\end{align}
\end{subequations}
Ignoring the last two summands and multiplying by $(P_m \otimes Q_m)^\top$ yields
the approximation
\begin{alignat}{1}
    (M_1 \oplus M_2) &\approx (P_m \otimes Q_m)(T_1 \oplus T_2)(P_m \otimes Q_m)^\top,
\end{alignat}
which after applying $f$ and multiplying $b_1 \otimes b_2$ leads to the approximation
\begin{alignat}{1}
    f(M_{1}\oplus M_{2}) (b_1 \otimes b_2) \approx (P_m \otimes Q_m)f(T_1 \oplus T_2)(P_m \otimes Q_m)^\T (b_1 \otimes b_2)
\end{alignat}
of the expression \eqref{eq:BS_Expression} as proposed
by Benzi and Simoncini.
We note that, due to the orthonormality of the bases $P_m$ and $Q_m$ and their relation to the vectors $b_{1}, b_{2}$ that generate the subspaces $\mc{K}(M_1, b_1)$, $\mc{K}(M_2, b_2)$, the approximation simplifies to
\begin{subequations}
\begin{align}
    f(M_{1}\oplus M_{2}) (b_1 \otimes b_2) &\approx \|b_1\|\|b_2\| (P_m \otimes Q_m)f(T_1 \oplus T_2) e_1\\
    &= \|b_1\|\|b_2\| \vvec_r\left(P_m \ \vvec_r^{-1}\big( f(T_1 \oplus T_2) e_1 \big) Q_m^\top\right),\label{eq:BSApproximation}
\end{align}
\end{subequations}
where $e_1 \in \R^{m^2}$ denotes the first unit vector.

\begin{remark}[\textbf{complexity of the approximation \eqref{eq:BSApproximation}}]
    Computing and storing the matrices $P_m$, $Q_m$, $T_1$ and $T_2$
    has space complexity $\mc{O}(2 k m^2)$ and a time
    complexity of $\mc{O}(2k(m+1))$~\cite[p.~132]{Saad2011}.
    Storing the matrices $T_1 \oplus T_2$ and $f(T_1 \oplus T_2)$ has complexity
    $\mc{O}(m^4)$.
    Finally, multiplying the three matrices $P_m \in \Reals^{k \times m}$, $\vvec_r^{-1}\left( f(T_1 \oplus T_2) e_1 \right) \in \Reals^{m \times m}$
    and $Q_m^\top \in \Reals^{m \times k}$ has time complexity
    $\mc{O}(k^2m + km^2)$ and space complexity $\mc{O}(k^2 + km)$.

    Ignoring negligible terms (recall $m \ll k$), the entire approximation has computational complexity $\mc{O}(k^2m)$ and storage
    complexity $\mc{O}(k^2)$.
    Compared to the Krylov subspace approximation of \eqref{eq:BS_Expression} discussed in the preceding section, 
this is a reduction of space complexity by a factor $m^2$.

    Consider an image with
    $512 \times 512$ pixels ($|I| = 262\,144$) and $|J|=10$ labels as in Remark~\ref{rmk:StorageNaiveKrylov}. Then the
    approximation~\eqref{eq:BSApproximation} requires to store a bit more than $50$ terabytes.
    While this is a huge improvement compared to the $5000$ terabytes from the
    Krylov approximation (see Remark~\ref{rmk:StorageNaiveKrylov}), using this
    approximation is still computationally infeasible.
    This motivates why we introduce below an additional low-rank approximation that
    yields a computationally feasible and efficient gradient approximation.
\end{remark}

\subsubsection{Low-Rank Approximation}\label{ssec:Low_Rank_Approximation}
We consider again the approximation~\eqref{eq:BSApproximation}
\begin{alignat}{1}
    f(M_{1}\oplus M_{2}) (b_1 \otimes b_2) &\approx \|b_1\|\|b_2\| \vvec_r\left(P_m \ \vvec_r^{-1}\big( f(T_1 \oplus T_2) e_1 \big) Q_m^\top\right)
\end{alignat}
and decompose the matrix
$\vvec_r^{-1}\big( f(T_1 \oplus T_2) e_1 \big) \in \Reals^{m \times m}$
using the singular value decomposition (SVD)
\begin{equation}
    \vvec_r^{-1}\big( f(T_1 \oplus T_2) e_1 \big)
    = \sum_{i \in [m] }\sigma_{i} y_{i}\otimes z_{i}^{\T},
\end{equation}
with $y_{i}, z_{i} \in \Reals^{m}$ and the singular values $\sigma_{i} \in \Reals,\; i\in[m]$.
As $m$ is generally quite small, computing the SVD is
neither computationally nor storage-wise expensive.
We accordingly rewrite the approximation in the form
\begin{subequations}
\begin{align}
    \|b_1\|\|b_2\| &\vvec_r\Big(P_m \ \vvec_r^{-1}\big( f(T_1 \oplus T_2) e_1 \big) Q_m^\top\Big) \\
    =~&\|b_1\|\|b_2\| \vvec_r\bigg(P_m \Big( \sum_{i \in [m] }\sigma_{i} y_{i}\otimes z_{i}^{\T} \Big) Q_m^\top\bigg) \\
    =~&\|b_1\|\|b_2\| \sum_{i \in [m] } \sigma_{i} (P_m y_{i}) \otimes (Q_m z_{i}). \label{eq:factorizedForm}
\end{align}
\end{subequations}
\begin{remark}[\textbf{space complexity}]
    While the factorized form~\eqref{eq:factorizedForm} is equal to the
    approximation~\eqref{eq:BSApproximation}, it requires only a fraction of the
    storage space:
    The intermediate results require storing $m$ singular values and $k$ numbers
    for each $P_m y_{i}$ and $Q_m z_{i}$, and the final approximation has an
    additional storage requirement of $\mathcal{O}(2 k m)$.
    In total $\mathcal{O}(4 k m)$ numbers need to be stored.

    For a $512 \times 512$ pixels image with $10$ labels (see
    Remark~\ref{rmk:StorageNaiveKrylov}), storing this
    approximation requires at most a gigabyte of memory.
\end{remark}

In practice, this can be further improved: Numerical experiments show that the singular values decay very rapidly, such that just the first singular value can be used to obtain the gradient approximation
\begin{alignat}{1}\label{eq:approx-Benzi-plus}
    f(M_{1}\oplus M_{2}) (b_1 \otimes b_2)
    \approx \|b_1\|\|b_2\| \sigma_{1} (P_m y_{1}) \otimes (Q_m z_{1}).
\end{alignat}
Numerical results in Section \ref{sec:Experiments} demonstrate that this approximation is sufficiently accurate.
\begin{remark}[\textbf{space complexity}]
    The term $\|b_1\|\|b_2\| \sigma_{1} (P_m y_{1}) \otimes (Q_m z_{1})$ requires
    to store $\mathcal{O}(2k)$ numbers, i.e.~about twice as much storage space
    as the original image.
    In total, we need to store $\mathcal{O}(2k + 2 k m)$ numbers.
    The required storage for the running example (see
    Remark~\ref{rmk:StorageNaiveKrylov}) now adds up to less than 500 megabytes.
\end{remark}
We conclude this section by returning to our problem using the notation \eqref{eq:BS_Expression_Notation} and state the proposed \textit{low-rank approximation of the loss function gradient}. By \eqref{eq:partial-mcL-explicit}, \eqref{eq:CT-partial-f}, \eqref{eq:BS_Expression_Notation} and \eqref{eq:approx-Benzi-plus}, we have
\begin{subequations}\label{eq:approx-gradient-final}
\begin{align}
\partial\mc{L}(\Omega)
&\approx c(\Omega) \cdot
\vvec_{r}^{-1}\Big(
df_{\mc{A}}\big(\vvec_{r}(\Omega)\big)^{\T}
\big(\sigma_{1} (P_m y_{1}) \otimes (Q_m z_{1})\big)
\intertext{where}
c(\Omega) &= \big\|\big(\expm(T A^{J}(\Omega)),v_{T}(\Omega)\big)^\top \partial f_{\mc{L}}\big(v_{T}(\Omega)\big)\big\|,
\\
v_{T}(\Omega) &= v(T;\Omega)\qquad\text{(cf.~\eqref{eq:vvec-v-c})}
\\
\sigma_{1}y_{1}\otimes z_{1}^{\T}
&\approx \vvec_{r}^{-1}\big(\vphi(T_{1}\otimes T_{2})e_{1}\big).
\qquad\text{(largest singular value and vectors)}
\end{align}
\end{subequations}
Here, the matrices $P_{m}, Q_{m}, T_{1}, T_{2}$ result from the Arnoldi iteration, cf.~\eqref{eq:Krylov-approx}, that returns the two Krylov subspaces used to approximate the matrix vector product $\vphi(-\mc{A}(\Omega)^{\T}\oplus \mc{A}(\Omega)) b_{1}$, with $b_{1}$ given by \eqref{eq:Benzi-notation-b1}.

\subsection{Computing the Gradient using Automatic Differentiation}\label{ssec:AutoDiff}
An entirely different approach to computing the gradient $\partial \mc{L}(\Omega)$
of the loss function~\eqref{eq:Omega-loss} is to not use an approximation of
the exact gradient given in closed form by \eqref{thm:loss-function-gradient}, but to replace the solution $v_{T}(\Omega)$ to the linearized assignment flow in \eqref{eq:partial-mcL-explicit-b} by an approximation determined by a numerical
integration scheme and to compute the exact gradient therefrom. Thus, one replaces
a \textit{differentiate-then-approximate} approach by an \textit{approximate-then-differentiate} alternative.
We numerically compare these two approaches in
Section~\ref{sec:Experiments}.

We sketch the latter alternative. Consider again the loss function~\eqref{eq:Omega-loss} evaluated at the
linearized assignment flow integrated up to time $T$
\begin{equation}
    \mc{L}(\Omega) = f_{\mc{L}}\big(v_{T}(\Omega)\big).
\end{equation}
Gradient approximations determined by automatic differentiation depend on what numerical scheme is used. We pick out two basic choices out of a broad range of proper schemes studied in \cite{Zeilmann2020}. In both cases, we implemented the loss function $f_{\mc{L}}$ in PyTorch together with the functions $\Omega \mapsto A^{J}(\Omega)$
and $\Omega \mapsto b(\Omega)$ given by \eqref{eq:vvec-v}. Now two approximations can be distinguished depending on how the mappings $(A^{J}(\Omega), b(\Omega)) \mapsto v_{T}(\Omega)=v(T;\Omega)$ are implemented.
\begin{description}
    \item[Automatic Differentiation based on the explicit Euler scheme]
        We partition the interval $[0,T]$ into $T/h$ subintervals with some
        step size $h>0$ and use the iterative scheme
        \begin{equation}\label{eq:Euler_Integration}
            v^{(k+1)} = v^{(k+1)} + h \big( A^J(\Omega) v^{(k)} + b(\Omega) \big),
            \qquad
            v^{(0)} = 0,
        \end{equation}
        in order to approximate $v_{T}(\Omega) \approx v^{(T/h)}$ the solution to the linearized assignment flow ODE \eqref{eq:vvec-v-a}.
        As the computations only involve basic linear algebra, PyTorch is able
        to compute the gradient using automatic differentiation.
    \item[Automatic Differentiation based on exponential integration]
        The second approximation utilizes the numerical integration scheme developed in Section~\ref{ssec:Exponential_Integration}. Again, only basic operations of linear algebra are involved so that PyTorch can compute the
        gradient using automatic differentiation.
The more special matrix exponential \eqref{eq:expm-0} is computed by PyTorch using a Taylor polynomial approximation~\cite{Bader2019}.
\end{description}

Both approaches determine an approximation of the Euclidean gradient $\partial\mc{L}(\Omega)$ which we subsequently convert
into an approximation of the Riemannian gradient using Equation \eqref{eq:R-grad-mcL}.

\section{Experiments}
\label{sec:Experiments}

In this section, we report and discuss a series of experiments illustrating our
novel gradient approximation~\eqref{eq:approx-gradient-final} and the
applicability of the linearized assignment flow to the image labeling problem.

We start with a discussion of the data generation
(Section~\ref{ssec:Data_Generation}) and the general experimental setup
(Section~\ref{ssec:Experimental_Setup}), before discussing properties of the
gradient approximation (Section~\ref{ssec:PropGradApprox}).
In order to illustrate a complete pipeline that can also label previously unseen
images, we trained a simple parameter predictor and report its application in
Section~\ref{ssec:Parameter_Prediction}.

\subsection{Data Generation}\label{ssec:Data_Generation}

As for the experiments, we focused on the image labeling
scenarios depicted in Figures \ref{fig:Experiments_VoronoiOutline} and \ref{fig:Experiments_ColorVoronoi}. Each scenario consists of a set containing five $128 \times 128$ pixel
images with \textit{random} Voronoi structure, in order to mimic low-dimensional structure that has to be separated in noisy data from the background. This task occurs frequently in applications and cannot be solved without \textit{adaptive} regularization.

For the design of the parameter predictor (Section~\ref{ssec:Parameter_Prediction}), we used all patches of five
additional unseen images for validation.
In all cases we report the mean over all labeled pixels of 5 training and validation images, respectively.
In order to test the resilience to noise, we added Gaussian noise to the images. The ground truth labeling is, in both labeling scenarios, given by the noiseless version of the images.

In the first scenario illustrated by Figure \ref{fig:Experiments_VoronoiOutline}, we want
to separate the boundary of the cells (black label) from their interior (white label).
The main difficulty here is to preserve the thin line structures even in the presence of image noise.
Weight patches with uniform (uninformed) weights average out most of the lines as
Figure~\ref{fig:Experiments_Labeling_uniform} shows.

In the second scenario illustrated by  Figure~\ref{fig:Experiments_ColorVoronoi}, we label the Voronoi
cells according to their color represented by 8 labels.
Due to superimposed noise, a pixelwise local rounding to the nearest label yields about 50\% wrongly labeled pixels, see Figure \ref{fig:Experiments_Labeling_pixelwise}.

\begin{figure}[!ht]
	\centering
	\begin{subfigure}[b]{0.3\textwidth}
		\includegraphics[width=\textwidth]{./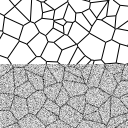}
		\caption{Random Voronoi line structure to be labeled from noisy input data.}
		\label{fig:Experiments_VoronoiOutline}
	\end{subfigure}
	\qquad
	\qquad
	\begin{subfigure}[b]{0.3\textwidth}
		\includegraphics[width=\textwidth]{./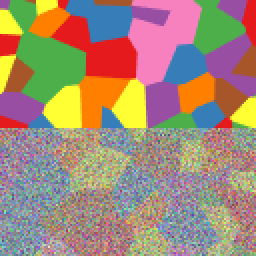}
		\caption{Random colored Voronoi regions to be labeled from noisy input data.}
		\label{fig:Experiments_ColorVoronoi}
	\end{subfigure}\\[5mm]
	\begin{subfigure}[b]{0.3\textwidth}
		\includegraphics[width=\textwidth]{./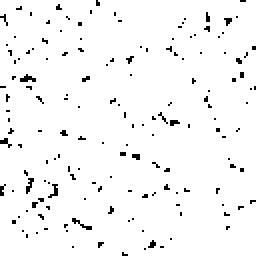}
		\caption{Labeling with uniform weights, that is without weight adaption, cannot separate line structure from the background in noisy data.}
		\label{fig:Experiments_Labeling_uniform}
	\end{subfigure}
	\qquad
	\qquad
	\begin{subfigure}[b]{0.3\textwidth}
		\includegraphics[width=\textwidth]{./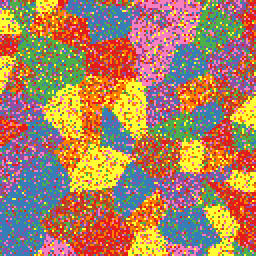}
		\caption{Pixelwise individual nearest label assignments produce an error rate larger than 50\%.}
		\label{fig:Experiments_Labeling_pixelwise}
	\end{subfigure}
	\caption{
		\textbf{Randomized scenarios for training and testing.}
		Two randomly generated images for the two respective scenarios that
		were used to evaluate weight parameter estimation and prediction.
		\textbf{(a)}
		Random line structure whose accurate labeling requires to adapt weight
		parameters.
		\textbf{(b)}
		Random Voronoi cells to be labeled by pixelwise assignment of one of the colors
		(\crule[c1]{1.75mm}{1.75mm}, \crule[c2]{1.75mm}{1.75mm},
		\crule[c3]{1.75mm}{1.75mm}, \crule[c4]{1.75mm}{1.75mm},
		\crule[c5]{1.75mm}{1.75mm}, \crule[c6]{1.75mm}{1.75mm},
		\crule[c7]{1.75mm}{1.75mm}, \crule[c8]{1.75mm}{1.75mm}).\\
		In both cases, Gaussian noise was added.
		The resulting noisy images are shown in the lower part of either panel
		(rescaled in the color channels to avoid color clipping).
		\textbf{(c)}
		The amount of noise is chosen quite large such that a labeling with
		uniform (``uninformed non-adaptive'') weights completely destroys the thin line structure in (a).
		\textbf{(d)}
		A pixelwise local nearest label assignment yields around 50\% wrongly labeled pixels for the
		labeling scenario depicted in (b).
		Both of these naive parameter settings indicate the need for a more
		structured choice of the weight patches, by taking into account local image features in a local spatial neighborhood.
	}\label{fig:Experiments}
\end{figure}

\subsection{Experimental Setup}\label{ssec:Experimental_Setup}

\textbf{Features and Parametrization.} For simplicity, we used the raw image data in a $3 \times 3$ window around each
pixel as feature \eqref{eq:mcF_I} for this pixel.
Weight patches $(\w_{ik})_{k\in\mc{N}_{i}}$ in the $\Omega$-matrix \eqref{eq:def-Omega} also had the size of $3 \times 3$ pixels in all experiments.
While the linearized assignment flow works with arbitrary features and also with larger
neighborhood sizes for the weight parameters,
the above setup suffices to illustrate and substantiate the contribution of this paper.

\textbf{Performance measure.} All labelings were evaluated on the tangent space of the assignment manifold using the loss function $f_\mathcal{L}$ given by \eqref{eq:cosineSimilarity}.
Since the values of this function are rather abstract, however, we report the
percentage of wrongly labeled pixels in all performance plots.

\textbf{Gradient computation.} We evaluated the loss function and approximated its Riemannian gradient
in three different ways, as further detailed in Section~\ref{ssec:PropGradApprox}, throughout using uniform (uninformed) weight patches as initialization. In particular, other common ways to update the parameters, like Adam or AdaMax \cite{Kingma2017}, are
possible as well, in conjunction with our approach. Therefore, we also compared gradient approximations based on our approach with the results of automatic differentiation, as implemented by PyTorch~\cite{Paszke2019}.

\textbf{Parameter prediction.} Parameter prediction for labeling novel data relies on the relation of features extracted from training data to corresponding parameters estimated by the Riemannian gradient descent \eqref{eq:R-descent-iteration}. For any feature extracted from novel data, the predictor specifies the parameters, to be used for labeling the data by integrating the linearized assignment flow after substituting the predicted parameters. Details are provided in Section \ref{ssec:Parameter_Prediction}.

\subsection{Properties of the Gradient Approximation}\label{ssec:PropGradApprox}

In this section, we report results that empirically validate our novel gradient
approximation~\eqref{eq:approx-gradient-final} by means of parameter estimation for the linearized assignment flow.

First, we compared our gradient approximation with two methods based on
automatic differentiation (backpropagation), see also Section~\ref{ssec:AutoDiff}.
To this end, we implemented in
PyTorch~\cite{Paszke2019} the simple explicit Euler
scheme \eqref{eq:Euler_Integration} for integrating the linearized assignment flow and computed the gradient of the loss function
$\mc{L}(\Omega)$ \eqref{eq:Omega-loss} with respect to $\Omega$ using automatic
differentiation.
Similarly, the Krylov subspace approximation~\eqref{eq:Krylov-vphi} of the
solution of the linearized assignment flow was implemented in PyTorch.
As all involved computations in this approximation are basic linear algebra
operations, PyTorch is able to apply automatic differentiation for evaluating
the gradient.

These gradients are used for carrying out the gradient descent iteration \eqref{eq:R-descent-iteration} in order to optimize the weight parameters.
Figure~\ref{fig:Algorithm_Comparison} illustrates the comparison of the three approaches.
Although they rely on quite different principles, we observe a remarkable comparability of the three approaches with respect to
the reduction of the percentage of wrongly labeled pixels per training iteration,
for both noisy and noiseless images. In particular, our low-rank approximation based on the closed-form loss function gradient expression is competitive. In view of the minor differences between the curves, we point out that
changing hyperparameters, like the step size in the gradient descent or the scale parameter $\tau$ of the regularizer $\mc{R}$ in \eqref{eq:R-Omega}, have a greater effect on the training performance
than the choice of either of the three approaches.
Overall, these results validate the closed form formulas in Section~\ref{sec:Loss-Function-Gradient} and, in particular, Theorem \ref{thm:loss-function-gradient}, and the subsequent low-rank approximation in Section~\ref{sec:Gradient-Approximation}. We point out, however, that our approach only reveals data-dependent low-dimensional subspaces where the essential parameters of the linearized assignment flow reside.

\begin{figure}[!ht]
	\centering
	\begin{subfigure}[b]{0.45\textwidth}
		\includegraphics[width=\textwidth]{./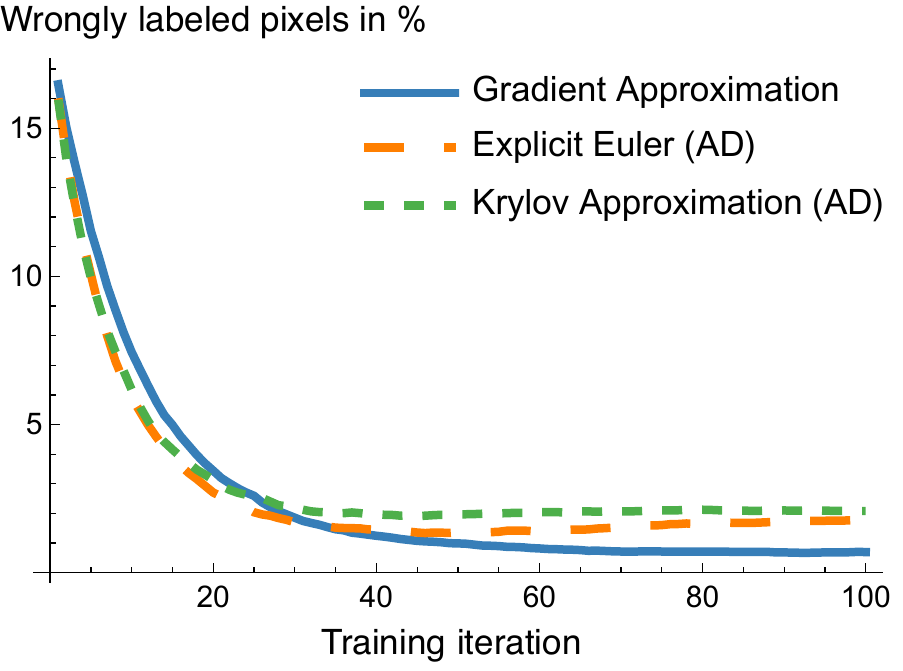}
		\caption{Noisy image}
		\label{fig:Algorithm_Comparison_Noisy_Image}
	\end{subfigure}
	\hfill
	\begin{subfigure}[b]{0.45\textwidth}
		\includegraphics[width=\textwidth]{./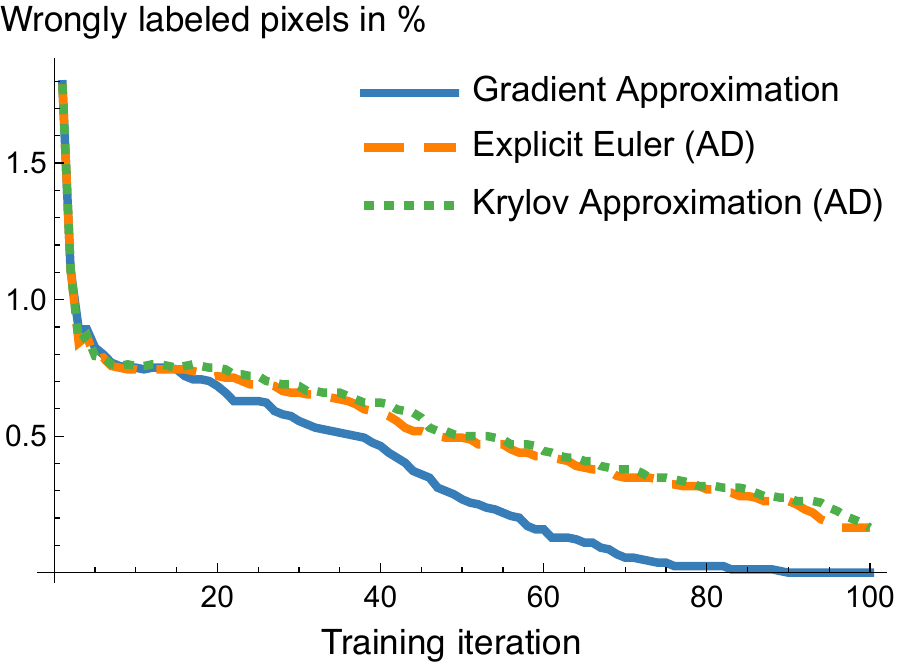}
		\caption{Noiseless image}
		\label{fig:Algorithm_Comparison_Noiseless_Image}
	\end{subfigure}
	\caption{
		\textbf{Comparing gradient approximation and automatic differentiation.}
		Both figures show, for the second scenario depicted by Figure \ref{fig:Experiments_ColorVoronoi}, the effect of parameter learning in terms of the labeling
		error during the training procedure \eqref{eq:R-descent-iteration}. Panel (a) shows the result for noisy input data, panel (b) or noiseless input data.
		Note the different scales of the two ordinates.
		As is exemplarily shown here by both figures, we generally observed very
		similar results for all three algorithms which validates the closed form
		formulas in Section~\ref{sec:Loss-Function-Gradient} and the subsequent
		subspace approximation in Section~\ref{sec:Gradient-Approximation}.
	}\label{fig:Algorithm_Comparison}
\end{figure}

Next, we compared our gradient approximation to the
exact gradient on a per-pixel basis.
However, as the exact gradient is computationally infeasible, we used the
gradient produced by automatic differentiation of the explicit Euler scheme with
a very small step size as surrogate.
Figure~\ref{fig:Gradient_Comparison_cosError} demonstrates the high accuracy
of our gradient approximation.
A pixelwise illustration of the gradient approximation, at the initial step of the training procedure for adapting the parameters,
is provided by Figure~\ref{fig:Gradient_Comparison_gradientNorm}.
The set of pixels with non-zero loss function gradient concentrate around the line structure since here weight adaption is required to achieve a proper labeling.

\begin{figure}[!ht]
	\centering
	\begin{subfigure}[b]{0.4\textwidth}
		\includegraphics[width=0.969\textwidth]{./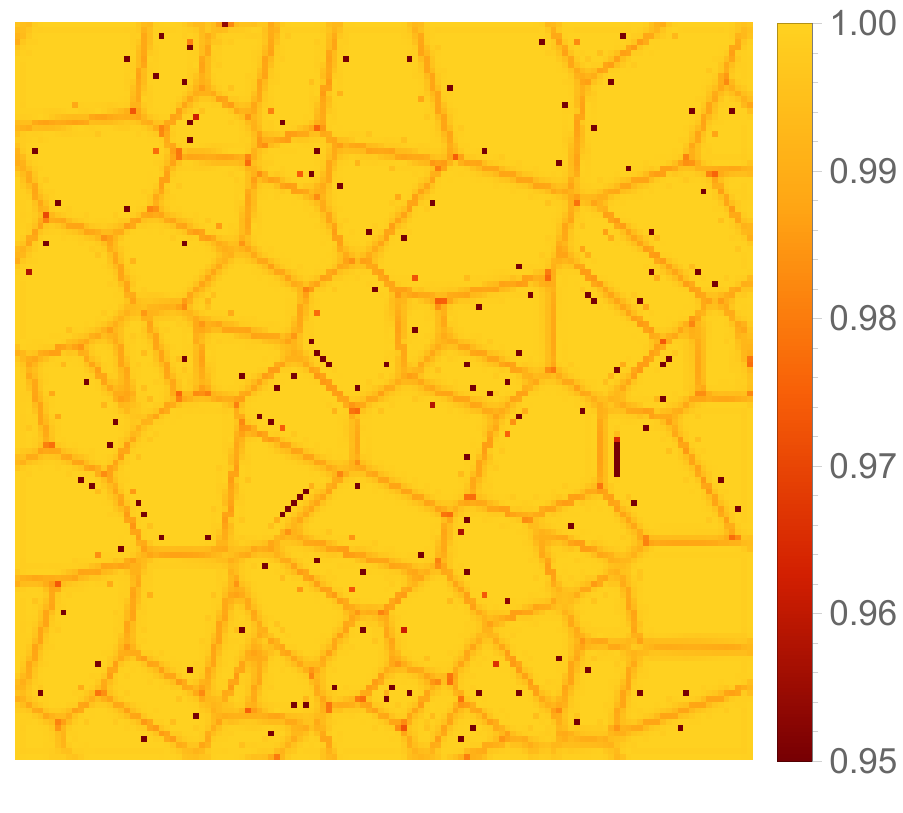}
		\caption{Gradient directions}
		\label{fig:Gradient_Comparison_cosError}
	\end{subfigure}
	\qquad
	\begin{subfigure}[b]{0.4\textwidth}
		\includegraphics[width=\textwidth]{./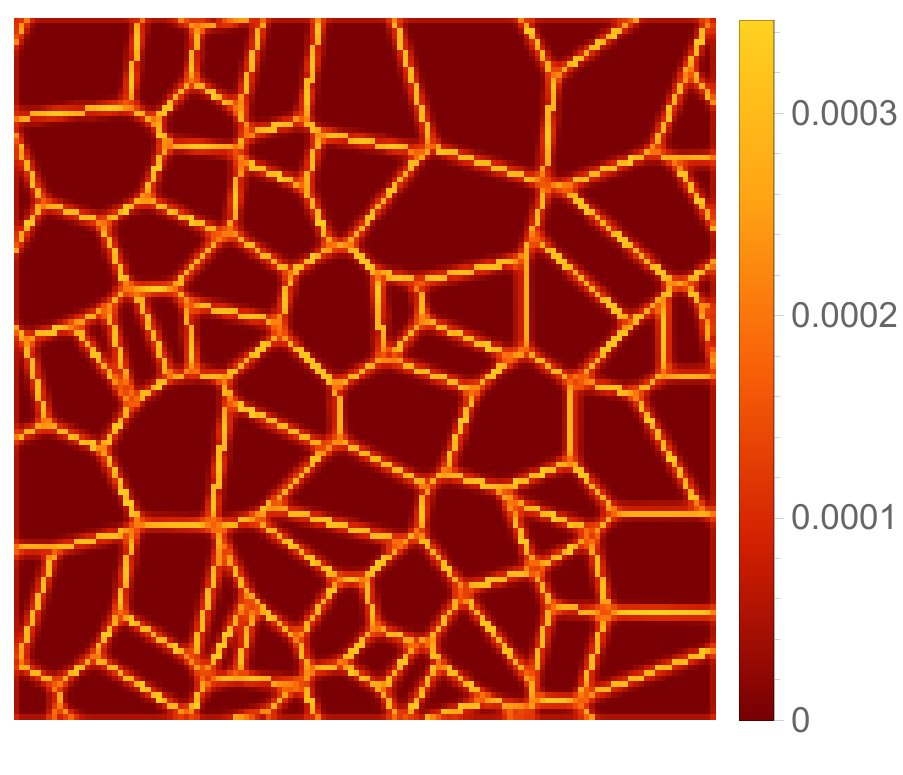}
		\caption{Norm of gradients}
		\label{fig:Gradient_Comparison_gradientNorm}
	\end{subfigure}
	\caption{
		\textbf{Checking the gradient approximation at each pixel.}
		We evaluated our gradient approximation~\eqref{eq:approx-gradient-final},
		at the first step of the training iteration and at each pixel, for the
		scenario depicted in Figure~\ref{fig:Experiments_VoronoiOutline}. As a proxy for the exact but computationally infeasible gradient, we used the gradient produced by automatic differentiation of the
		explicit Euler scheme with a very small step size.
		Then we compared both gradients at each pixel using the cosine
		similarity, i.e.~the value $1$ means that the gradients point exactly in the
		same direction, whereas $0$ signals orthogonality and $-1$ means that they
		point in opposite directions.
		\textbf{(a)} More than 99\% of the pixels have a value of 0.9 or more, corresponding
		to an angle of $26^\circ$ or less between the gradient directions. This
		illustrates excellent agreement between our gradient approximation and the
		exact gradient.
		Disagreements with the exact gradient occur rarely and randomly
		at isolated pixels throughout the image.
		\textbf{(b)} Norm of the gradients are displayed at each pixel.
		Non-vanishing norms indicate where parameter learning (adaption) occurs.
		Since the initial weight parameter patches are uniform, no adaption --
		corresponding to zero norms of gradients -- occurs in the interior of each Voronoi cell, because
		parameters are already optimal in such homogeneous regions.
    }\label{fig:Gradient_Comparison}
\end{figure}

Our last three experiments regarding the
gradient approximation, illustrated by Figure~\ref{fig:ParameterComparison}, concern
\begin{itemize}
\item
the influence of the Krylov dimension $m$,
\item
the rank of our approximation, and
\item
the time $T$ up to which the linearized assignment flow is integrated.
\end{itemize}
We observe according to  Figure \ref{fig:ParameterComparison_Krylov_noisy} that already Krylov subspace of small dimension $m \approx 10$ suffice for computing linearized assignment flows  and learning their parameters.
Similarly, the final rank-one gradient approximation of the gradient according to Eq.~\eqref{eq:approx-Benzi-plus} suffices for parameter estimation, as illustrated in Figure \ref{fig:ParameterComparison_Rank}.
These experiments show that quite \textit{low-dimensional} representations
suffice for representing the information required for optimal regularization of
dynamic image labeling.
We point out that such insights cannot be gained from automatic differentiation.

The influence of the time $T$ used for integrating the linearized assignment flow on parameter learning is illustrated in Figure~\ref{fig:ParameterComparison_Time}.
For the considered parameter estimation setup, we observe
that already small integration times $T$ yield good training results, whereas
large times $T$ yield slower convergence. A possible explanation is that, in the latter case, the linearized assignment flow is close to an integral solution which, when erroneous, is more difficult to correct.

\begin{figure}[!ht]
	\centering
	\begin{subfigure}[b]{0.3\textwidth}
		\includegraphics[width=\textwidth]{./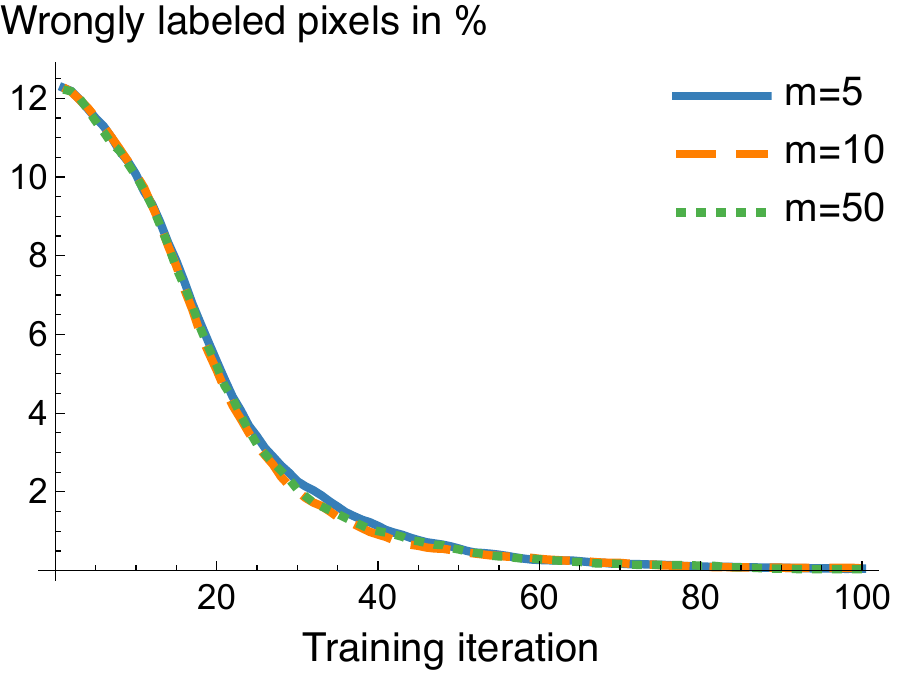}
		\caption{Krylov subspace dimension $m$}
		\label{fig:ParameterComparison_Krylov_noisy}
	\end{subfigure}
	\quad
	\begin{subfigure}[b]{0.3\textwidth}
		\includegraphics[width=\textwidth]{./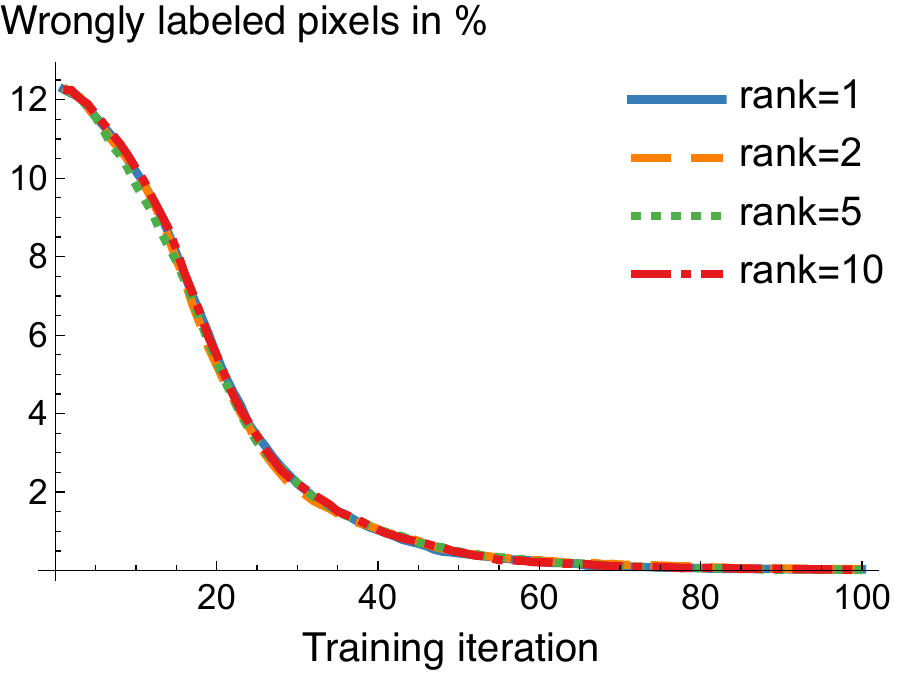}
		\caption{Rank of the approximation}
		\label{fig:ParameterComparison_Rank}
	\end{subfigure}
	\quad
	\begin{subfigure}[b]{0.3\textwidth}
		\includegraphics[width=\textwidth]{./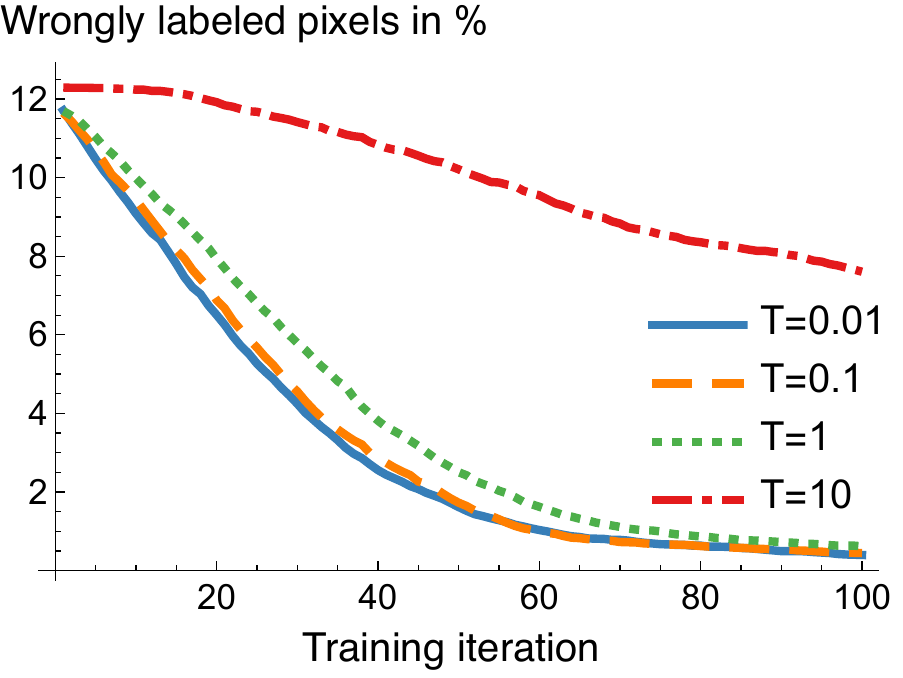}
		\caption{Integration time $T$}
		\label{fig:ParameterComparison_Time}
	\end{subfigure}
	\caption{
		\textbf{Influence of Krylov subspace dimension, rank of the gradient approximation and
		integration time.}
		The setup of Figure~\ref{fig:Algorithm_Comparison} was used to
		demonstrate the influence of the Krylov subspace dimension, the
		low-rank approximation and the integration time $T$ on our gradient
		approximation for parameter learning.
		\textbf{(a)}
		In general, we observed that Krylov dimensions of $5$ to
		$10$ are sufficient for most experiments.
		Larger Krylov dimensions only increase the computation time without
		any noticeable improvement of accuracy.
		\textbf{(b)}
		Training curves for different low-rank approximations coincide.
		This illustrates that just selecting the largest singular value and
		vectors in \eqref{eq:approx-gradient-final}, according to the final rank-one approximation \eqref{eq:approx-Benzi-plus}, suffices for parameter
		learning.
		\textbf{(c)}
		For small integration times $T$, the convergence rates of training do not much differ.
		Only for larger time points $T$, we observe slower convergence of training, presumably because almost hard decisions are more difficult to correct by changing the parameters of the underlying dynamical system.
	}\label{fig:ParameterComparison}
\end{figure}

\subsection{Parameter Prediction}\label{ssec:Parameter_Prediction}
Besides parameter learning, parameter \textit{prediction} for unseen test data
defines another important task.
This task amounts to model and represent the relation of local features and optimal weight parameters, as basis to predict proper weights in unseen test data as a function of corresponding local features. 

We illustrate this for the scenario depicted by
Figure~\ref{fig:Experiments_VoronoiOutline} using the following simple end-to-end
learned approach to parameter prediction.
We trained a predictor that produces a weight patch $\wh{\Omega}_{i}$
given the features $f_i$ at vertex $i$ of novel unseen data. 
The predictor is parameterized with $N=50$ by
\begin{subequations}
\begin{align}
	&p_j \in \mathbb{R}^{3 |\mathcal{N}|},\; j\in [N] 
	&&\text{feature prototypes,}
	\\
	&\nu_j \in T_0,\; j\in [N] 
	&&\text{tangent vectors representing prototypical weight patches,}
\end{align}
\end{subequations}
and a scale parameter $\sigma \in \mathbb{R}$.
Similar to the assignment vectors~\eqref{eq:def-Wi}, the to-be-predicted weight
patches $\wh{\Omega}_{i}$ are elements of the probability simplex $\mathring\Delta_{|\mc{N}_{i}|}$, see~\eqref{eq:def-Omega}.
Accordingly, use tangent vector $\nu_{j}\in T_0$ to
represent weight patches. In particular, tangent vector of \textit{predicted} weight patches result from weight averaging of vectors $\{\nu_{j}\}_{j\in[N]}$, and the predicted weight patch by lifting, see \eqref{eq:weight-prediction}.

We initialize $\sigma = 1$ and initialize the $p_j,\,j\in[N]$ by clustering noise-free patches extracted from of training images.
Given $p_{j}$, we initialize $\nu_j$ such that it is directed towards the label of the corresponding prototypical patch,
\begin{alignat}{1}
    \nu_{j} &= \Pi_0
	\begin{pmatrix}
		e^{ - \| p_{j,1} - p_{j,\text{center pixel}} \| },
		\dots,
		e^{ - \| p_{j,|\mathcal{N}|} - p_{j,\text{center pixel}} \| }
	\end{pmatrix}^\top,\qquad j\in[N].
\end{alignat}

The predictor is trained by the following gradient descent iteration. 
As the change in the number of wrongly labeled pixels was small, we stopped the iteration after $100$ steps, see Figure~\ref{fig:Prediction_Curves}.
\begin{enumerate}[(1)]
	\item We compute the similarities
		\begin{equation}
			s_{ij} = e^{-\sigma \|f_i-p_{j}\|},\qquad j\in[N]
		\end{equation}
		for each $f_i$ and pixels $i$ in all training images.
	\item We predict the corresponding weight patches as lifted weighted average
		of the tangent vectors $\nu_{j}$ 
		\begin{equation}\label{eq:weight-prediction}
			\wh{\Omega}_{i}(\nu, p, \sigma) = \exp_{\eins_{\Omega}}\bigg(\sum_{j\in[N]}\frac{s_{ij}}{\sum_{k\in[N]}s_{ik}} \nu_{j}\bigg).
		\end{equation}
	\item Substituting $\wh{\Omega}$ for $\Omega$, we run the linearized assignment flow and evaluate the distance function~\eqref{eq:cosineSimilarity}.
	\item The gradient of this function with respect to the predictor parameters $(\nu,p,\sigma)$ results from composing the differential due to Theorem \ref{thm:loss-function-gradient} and the differential of \eqref{eq:weight-prediction}. 
	\item The gradient is used to update the predictor parameters, and all steps are repeated.
\end{enumerate}
During training, the accuracy of the predictor is monitored, as illustrated by Figure \ref{fig:Prediction_Curves}. The iteration terminates when the slope of the validation curve, which measures label changes, are sufficiently flat.

After the training of the predictor, the linearized assignment flow is parametrized in a data-driven way so as to separate reliably line structure in noisy data for arbitrary random instances, as depicted by Figure \ref{fig:Prediction}: panel (f) and last row. 
This result should be compared to the non-adaptive labeling result in Figure~\ref{fig:Experiments_Labeling_uniform}.

\begin{figure}
	\centering
	\begin{subfigure}[b]{0.25\textwidth}
		\includegraphics[width=\textwidth]{./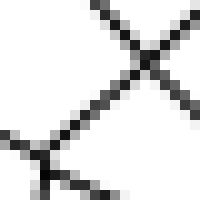}
		\caption{Section of noise-free image}
		\label{fig:Prediction_Noiseless_Input}
	\end{subfigure}\quad
	\begin{subfigure}[b]{0.25\textwidth}
		\includegraphics[width=\textwidth]{./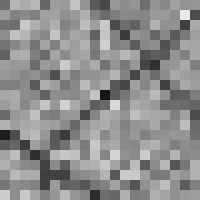}
		\caption{Section of noisy image}
		\label{fig:Prediction_Noisy_Input}
	\end{subfigure}\quad
	\begin{subfigure}[b]{0.4\textwidth}
		\includegraphics[width=\textwidth]{./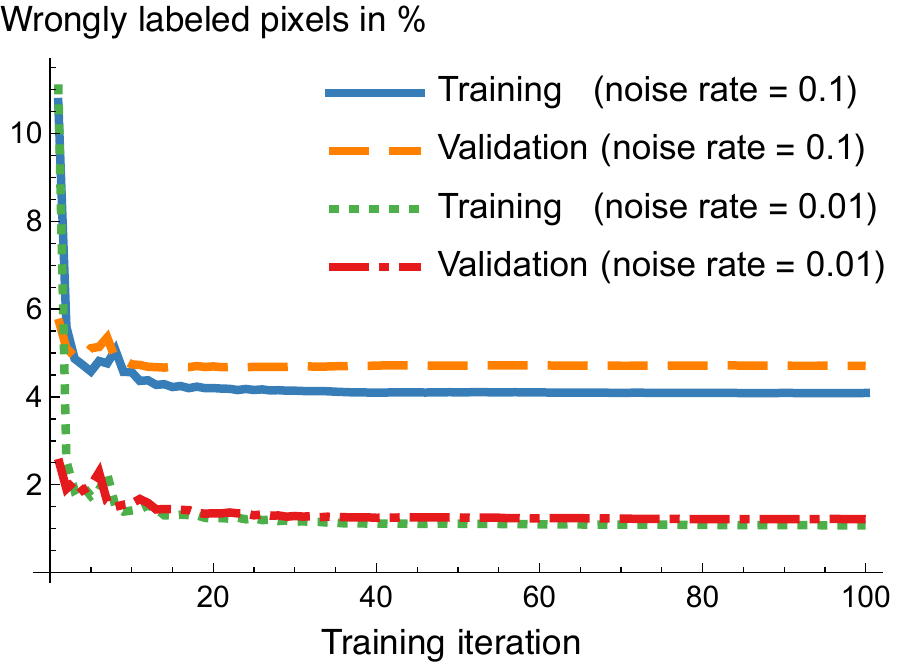}
		\caption{Predictor accuracy}
		\label{fig:Prediction_Curves}
	\end{subfigure}

	\begin{subfigure}[b]{0.3\textwidth}
		\includegraphics[width=\textwidth]{./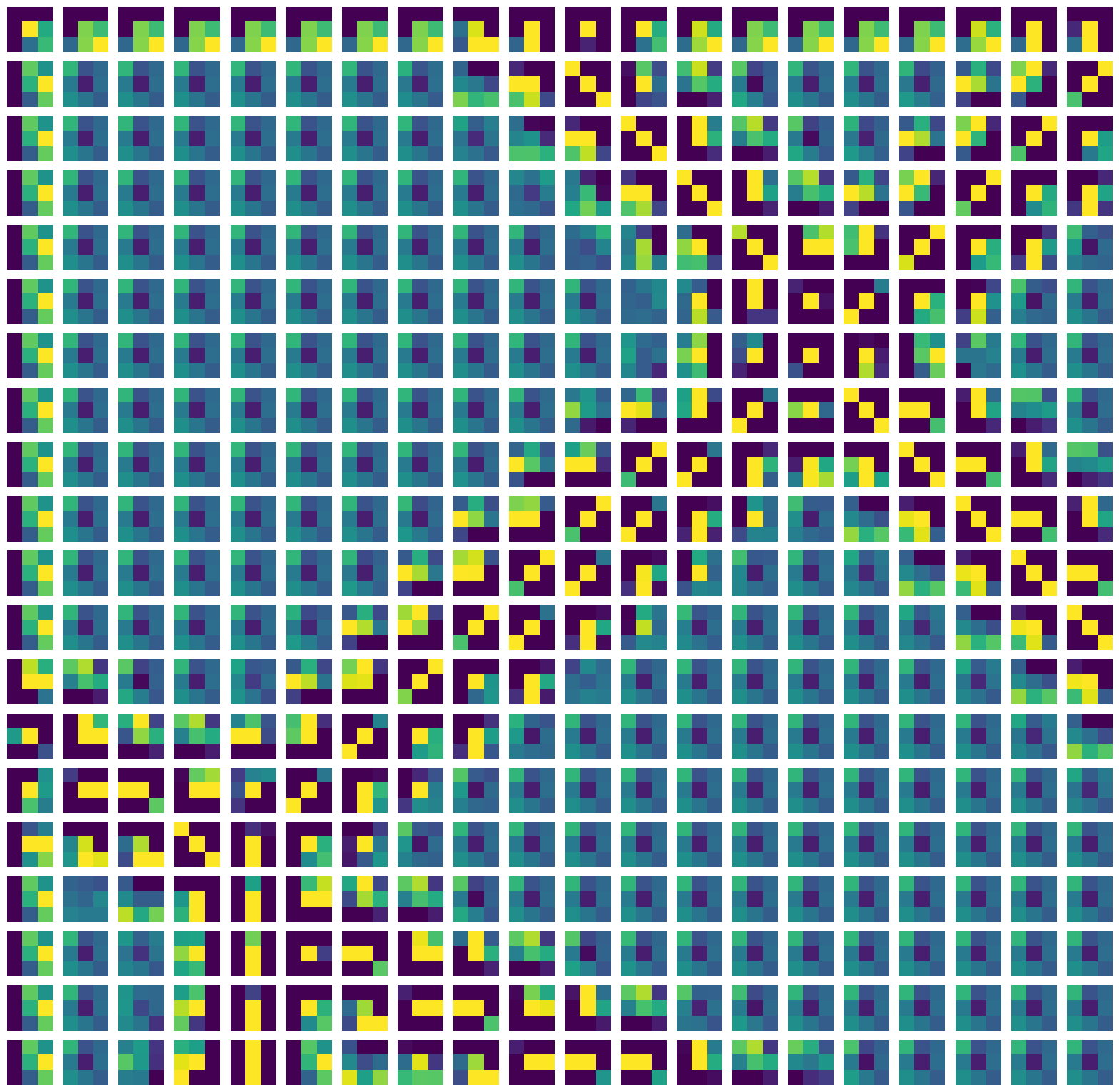}
		\caption{Predicted $\Omega$-weight patches for noiseless input}
		\label{fig:Prediction_Noiseless_Weights}
	\end{subfigure}\quad
	\begin{subfigure}[b]{0.3\textwidth}
		\includegraphics[width=\textwidth]{./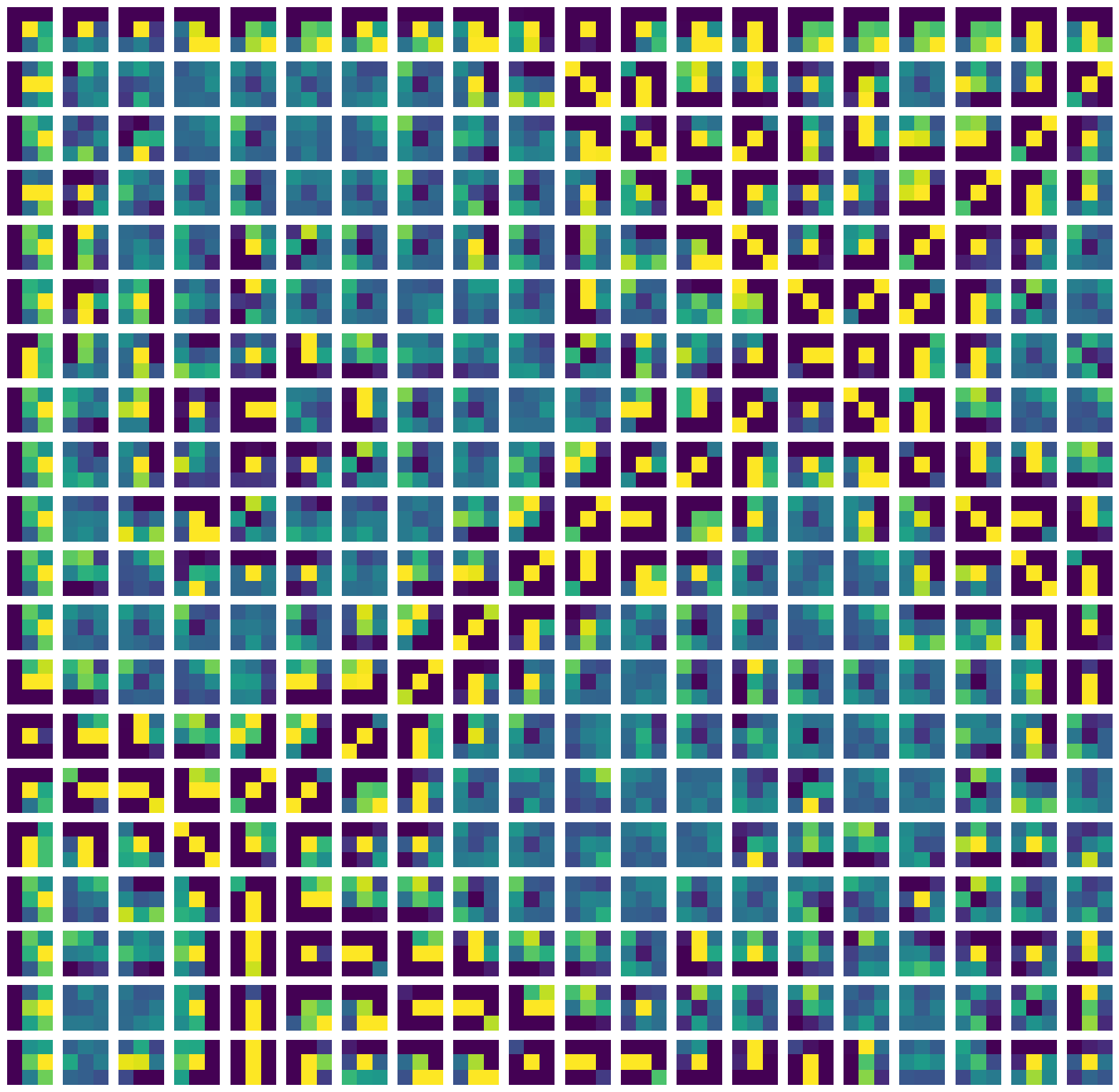}
		\caption{Predicted $\Omega$-weight patches for noisy input}
		\label{fig:Prediction_Noisy_weights}
	\end{subfigure}\quad
	\begin{subfigure}[b]{0.3\textwidth}
		\includegraphics[width=\textwidth]{./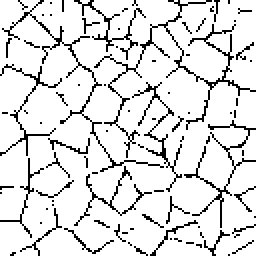}
		\caption{Labeling of the noisy image with predicted weights}
		\label{fig:Prediction_Labeling}
	\end{subfigure}

	\begin{subfigure}[b]{0.3\textwidth}
		\includegraphics[width=\textwidth]{./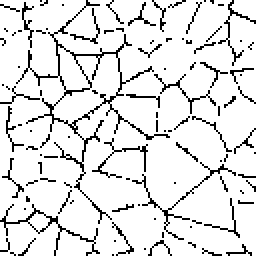}
		\label{fig:Prediction_Labeling_1}
	\end{subfigure}\quad
	\begin{subfigure}[b]{0.3\textwidth}
		\includegraphics[width=\textwidth]{./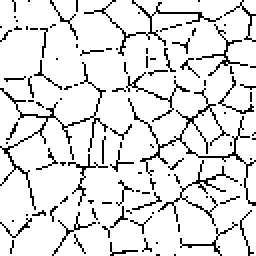}
		\label{fig:Prediction_Labeling_2}
	\end{subfigure}\quad
	\begin{subfigure}[b]{0.3\textwidth}
		\includegraphics[width=\textwidth]{./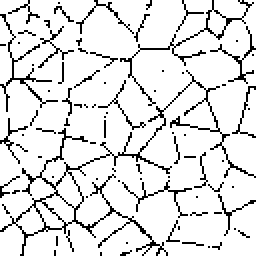}
		\label{fig:Prediction_Labeling_3}
	\end{subfigure}

	\caption{
		\textbf{Parameter Predictor.} We learned a weight patch
		predictor as described in Section \ref{ssec:Parameter_Prediction} for
		the scenario depicted by Figure \ref{fig:Experiments_VoronoiOutline}.
		In order to assess the predicted parameters by comparison, we also
		estimated weights patches for the \textit{noise-free} test data in the
		same way as for the training data.
		\textbf{(a)}
		Section of a noise-free test image.
		\textbf{(b)}
		The corresponding section of the noisy test image that is used as input
		data for prediction.
		\textbf{(c)}
		The training and validation accuracy during the training of the predictor.
		\textbf{(d)}
		Weight patches estimated for the \textit{noise-free} data (a).
		\textbf{(e)}
		Predicted weight patches based on the \textit{noisy} data (b).
		\textbf{(f)}
		The labeled (section of the) test image using the predicted weight patches (d).
		Comparing this result to the result depicted by Figure \ref{fig:Experiments_Labeling_uniform}
		shows the effect of \textit{predicted} parameter adaption.
		\textbf{Last row:}
		Further labelings on unseen noisy random test images.
	}\label{fig:Prediction}
\end{figure}

\section{Conclusion and Further Work}
\label{sec:Conclusion}
\subsection{Conclusion}
We presented a novel approach for learning the parameters of the linearized
assignment flow for image labeling. Based on the exact formula of the parameter gradient of a loss function subject to the ODE-constraint, an approximation of the gradient was derived
using exponential integration and a Krylov subspace based low-rank approximation, that is memory efficient and sufficiently accurate.
Experiments demonstrate that our research implementation is on par with highly tuned-machine
learning toolboxes. Unlike the latter, however, our approach additionally returns the essential information for
image labeling in terms of a low-dimensional parameter subspace.

\subsection{Future Work}\label{ssec:Application_Generalized_LAF}
Our future work will study generalizations of the linearized assignment flow. Since this can be done within the overall mathematical framework of the assignment flow approach, the result presented in this paper are applicable. We briefly indicate this for the continuous-time ODE \eqref{eq:LAF-intro} that we write down here again with an index $0$,
\begin{equation}\label{eq:ODE-V0}
\dot V_{0} = A_{0}(\Omega_{0}) V_{0} + B_{0}.
\end{equation}
Recall that $B_{0}$, given by $B_{W_{0}}$ of \eqref{eq:LAF-V-b}, represents the input data \eqref{eq:def-Di} via the mappings \eqref{eq:def-Li} and \eqref{eq:def-Si}. Now suppose the data are represented in another way and denoted by $B_{1}$.  Then consider the additional system
\begin{equation}\label{eq:ODE-V1}
\dot V_{1} = A_{1}(\Omega_{1}) V_{1} + B_{1} + V_{0}(T) L,
\end{equation}
where the solution $V_{0}(T_{0})$ to \eqref{eq:ODE-V0} at time $t=T_{0}$, possibly transformed to a tangent subspace by a linear mapping $L$, modifies the data term $B_{1}$ of \eqref{eq:ODE-V1}. Applying \eqref{eq:V-t-vphi} to \eqref{eq:ODE-V0} at time $t=T_{0}$ and to \eqref{eq:ODE-V1} at time $t=T_{1}$ yields the solution
\begin{equation}
V_{1}(T_{1}) = T_{1}\vphi\big(T_{1} A_{1}(\Omega)\big)\Big(B_{1}+T_{0}\vphi\big(T_{0} A_{0}(\Omega_{0})\big)B_{0} L\Big),
\end{equation}
which is a \textit{composition} of linearized assignment flows and hence linear too, due to the \textit{sequential} coupling of \eqref{eq:ODE-V0} and \eqref{eq:ODE-V1}. \textit{Parallel} coupling of the dynamical systems is feasible as well and leads to larger matrix $\vphi$ that is \textit{structured} and linearly depends on the components $A_{0}(\Omega_{0}), A_{1}(\Omega_{1}), L$. Designing larger networks of this sort by repeating these steps is straightforward.

In either case, the overall basic structure of \eqref{eq:LAF-intro}, \eqref{eq:W-VT} is preserved. This enables us to broaden the scope of assignment flows for applications and to study, in a controlled manner, various mathematical aspects of deep networks in terms of sequences of generalized linearized assignment flow, analogous to \eqref{eq:exp-int-AF-intro}.

\vspace{0.5cm}
\small\noindent
\textbf{Acknowledgement.} This work is funded by the Deutsche Forschungsgemeinschaft (DFG, German Research Foundation) under Germany's Excellence Strategy EXC 2181/1 - 390900948 (the Heidelberg STRUCTURES Excellence Cluster), and within the DFG priority programme 2298 on the ``Theoretical Foundations of Deep Learning'', grant SCHN 457/17-1.
\normalsize

\bibliographystyle{amsalpha}
\bibliography{LAF-Learning}

\appendix
\section{Proofs}
\label{sec:Appendix}

\subsection{Proofs of Section \ref{sec:Closed-Form-Gradient}}\label{app:closed-form-gradient}

\begin{proof}[Proof of Lemma \ref{lem:A1}]
Regarding the differential of the mapping \eqref{eq:def-exp} with respect to its second argument, we have $d\exp_{p}(u)v = R_{\exp_{p}(u)} v$ by \cite[Lemma 4.5]{Zeilmann2020}, with $R$ given by \eqref{eq:def-R-p}. Applying this relation to \eqref{eq:f-A1} where $\exp_{\eins_{\mc{W}}}$ acts row-wise analogous to the mapping $R_{W}$ as explained by \eqref{eq:def-R-Exp-exp-product} and \eqref{eq:RW-SW}, yields
\begin{equation}
df_{1}(\Omega) Y = R_{\exp_{\eins_{\mc{W}}}(-\frac{1}{\rho}\Omega D)}\Big(-\frac{1}{\rho}Y D\Big)
= R_{f_{1}(\Omega)}\Big(-\frac{1}{\rho}Y D\Big),\qquad
\forall Y\in\R^{|I|\times |I|},
\end{equation}
which is \eqref{eq:df-A1-a}. As for the transpose, we vectorize both sides using again \eqref{eq:RW-SW},
\begin{equation}\label{eq:vec-df1-Y}
\vvec_{r}\big(df_{1}(\Omega)Y\big)
= \Diag(R_{f_{1}(\Omega)})\vvec_{r}\Big(-\frac{1}{\rho} Y D\Big)
= -\frac{1}{\rho}\Diag(R_{f_{1}(\Omega)}) (I_{|I|}\otimes D^{\T})\vvec_{r}(Y).
\end{equation}
Applying the transposed matrix to any vector $\vvec_{r}(Z)$ with $Z\in\R^{|I|\times |J|}$ and taking into account the symmetry of the matrix $\Diag(R_{f_{1}(\Omega)})$, yields
\begin{subequations}
\begin{align}
df_{1}(\Omega)^{\T} Z
&= -\frac{1}{\rho}\vvec_{r}^{-1}\big(
(I_{|I|}\otimes D) \Diag(R_{f_{1}(\Omega)}) \vvec_{r}(Z)\big)
\\
\overset{\eqref{eq:RW-SW}}&{=}
-\frac{1}{\rho}\vvec_{r}^{-1}\big((I_{|I|}\otimes D)\vvec_{r}(R_{f_{1}(\Omega)} Z)\big)
= -\frac{1}{\rho}R_{f_{1}(\Omega)}(Z) D^{\T}. \qedhere
\end{align}
\end{subequations}
\end{proof}
\begin{proof}[Proof of Lemma \ref{lem:A2}]
Since $R_{W_{0}}$ does not depend on $\Omega$ and $\vvec_{r}$ is linear, we directly obtain \eqref{eq:df-A2-a}. Regarding the transpose map, we expand the right-hand side of \eqref{eq:df-A2-a},
\begin{equation}
df_{2}(\Omega)Y
\overset{\eqref{eq:RW-SW}}{=}
\Diag(R_{W_{0}})\vvec_{r}(df_{1}(\Omega) Y)
\overset{\eqref{eq:vec-df1-Y}}{=}
-\frac{1}{\rho}\Diag(R_{W_{0}})\Diag(R_{f_{1}(\Omega)}) (I_{|I|}\otimes D^{\T})\vvec_{r}(Y).
\end{equation}
Applying the transposed matrix to any vector $\vvec_{r}(Z)\in\R^{|I|^{2}}$ yields (recall that the matrices $\Diag(R_{W_{0}})$, $\Diag(R_{f_{1}(\Omega)})$ are symmetric)
\begin{subequations}
\begin{align}
d f_{2}(\Omega)^{\T} Z
&= -\frac{1}{\rho}\vvec_{r}^{-1}\big((I_{|I|}\otimes D) \Diag(R_{f_{1}(\Omega)})\Diag(R_{W_{0}})\vvec_{r}(Z)\big)
\\
\overset{\eqref{eq:RW-SW}}&{=}
-\frac{1}{\rho}\vvec_{r}^{-1}\big((I_{|I|}\otimes D)\Diag(R_{f_{1}(\Omega)})\vvec_{r}(R_{W_{0}}Z)\big)
\\
\overset{\eqref{eq:RW-SW}}&{=}
-\frac{1}{\rho}\vvec_{r}^{-1}\Big((I_{|I|}\otimes D)\vvec_{r}\big(R_{f_{1}(\Omega)}(R_{W_{0}}Z)\big)\Big)
= -\frac{1}{\rho}R_{f_{1}(\Omega)}(R_{W_{0}}Z) D^{\T}
\\
\overset{\eqref{eq:df-A1-b}}&{=}
df_{1}(\Omega)^{\T}(R_{W_{0}}Z). \qedhere
\end{align}
\end{subequations}
\end{proof}
\begin{proof}[Proof of Lemma \ref{lem:A3}]
We have
\begin{equation}
df_{3}(\Omega)Y
= \big(d\Diag(R_{f_{1}(\Omega)})Y\big)(\Omega\otimes I_{|J|})
+ \Diag(R_{f_{1}(\Omega)})(Y\otimes I_{|J|}),\quad
\forall Y\in\R^{|I|\times |I|}
\end{equation}
and have to the differential in the first summand on the right-hand side. By \eqref{eq:RW-SW},
\begin{equation}
\Diag(R_{f_{1}(\Omega)})\vvec_{r}(S)
= \vvec_{r}(R_{f_{1}(\Omega)} S),\quad
\forall S\in\R^{|I|\times |J|}
\end{equation}
and hence $d\Diag(R_{f_{1}(\Omega)})$ is given by
\begin{equation}\label{eq:lem-proof-f3-def-eq}
\big(d\Diag(R_{f_{1}(\Omega)}) Y\big)\vvec_{r}(S)
= \vvec_{r}\big((d R_{f_{1}(\Omega)} Y) S\big),
\qquad \forall Y\in\R^{|I|\times |I|},
\quad \forall S\in\R^{|I|\times |J|}.
\end{equation}
It remains to compute $d R_{f_{1}(\Omega)}$ and to evaluate the defining right-hand side, to obtain the left-hand side in explicit form.
Focusing on a single component $R_{f_{1i}}(\Omega)$ of the mapping $R_{f_{1}(\Omega)}$, we have by \eqref{eq:def-R-p}
\begin{subequations}
\begin{align}
R_{f_{1i}}(\Omega)
&= \Diag\big(f_{1i}(\Omega)\big)-f_{1i}(\Omega) f_{1i}(\Omega)^{\T}
\\
d R_{f_{1i}(\Omega)} Y
&= \Diag\big(d f_{1i}(\Omega) Y\big)-\big(df_{1i}(\Omega) Y\big)f_{1i}(\Omega)^{\T} - f_{1i}(\Omega) \big(df_{1i}(\Omega) Y\big)^{\T}
\intertext{
and hence for any $S_{i}\in\R^{|J|}$ and $S = (\dotsc,S_{i},\dotsc)^{\T} \in\R^{|I|\times |J|}$
}
(d R_{f_{1i}(\Omega)} Y) S_{i}
&= \big((d R_{f_{1}(\Omega)} Y) S\big)_{i},\quad i\in I.
\intertext{
Thus, analogous to \eqref{eq:RW-SW}, we obtain
}
(d R_{f_{1}(\Omega)} Y) S
&= \big(\dotsc,(d R_{f_{1i}(\Omega)} Y) S_{i},\dotsc)^{\T}
= \vvec_{r}^{-1}\Big(
\big(\Diag(d R_{f_{1}(\Omega)}) Y\big)\vvec_{r}(S)
\Big).
\intertext{
Applying $\vvec_{r}$ to both sides and comparing with \eqref{eq:lem-proof-f3-def-eq}, we conclude
}
d\Diag(R_{f_{1}(\Omega)}) Y
&= \Diag(d R_{f_{1}(\Omega)} Y)
\end{align}
\end{subequations}
which proves \eqref{eq:df-A3}.
\end{proof}
\begin{proof}[Proof of Lemma \ref{lem:dmcR-Omega}]
The mapping $\exp_{p}$ specified by \eqref{eq:def-exp} satisfies $\exp_{p}=\exp_{p}\circ \Pi_{0}$ and a short computation \cite[Appendix]{Astrom:2017ac}) shows that the restriction $\exp_{p}|_{T_{0}}$, again denoted by $\exp_{p}$,  has the inverse
\begin{equation}
\exp_{p}^{-1}\colon \mc{S}\to T_{0},\qquad
q \mapsto \Pi_{0}(\log q - \log p)
\end{equation}
and consequently the differential
\begin{equation}
d\exp_{p}^{-1}(q) u = \Pi_{0}\Big(\frac{u}{q}\Big),\quad u \in T_{0}.
\end{equation}
For $W, \wt{W}\in\mc{W}$ and $V\in\mc{T}_{0}$, this differential applies componentwise, i.e.~
\begin{equation}
\big(d\exp_{W}^{-1}(\wt{W}) V\big)_{i}
= \Pi_{0}\Big(\frac{V_{i}}{\wt{W}_{i}}\Big),\quad i\in I.
\end{equation}
Application to \eqref{eq:R-Omega} yields for any $Y\in\mc{Y}_{\Omega}$ equation \eqref{eq:dmcR-Omega}.
\end{proof}

\end{document}